\documentclass{article}

\PassOptionsToPackage{numbers, compress}{natbib}
\usepackage[preprint]{neurips_2026}

\usepackage[utf8]{inputenc} 
\usepackage[T1]{fontenc}    
\usepackage{hyperref}       
\usepackage{url}            
\usepackage{booktabs}       
\usepackage{amsfonts}       
\usepackage{nicefrac}       
\usepackage{microtype}      
\usepackage{xcolor}         

\usepackage{amsthm}

\theoremstyle{plain}
\newtheorem{theorem}{Theorem}[section]
\newtheorem{proposition}[theorem]{Proposition}
\newtheorem{lemma}[theorem]{Lemma}
\newtheorem{corollary}[theorem]{Corollary}
\theoremstyle{definition}
\newtheorem{definition}[theorem]{Definition}
\newtheorem{assumption}[theorem]{Assumption}
\newtheorem{example}[theorem]{Example}
\theoremstyle{remark}
\newtheorem{remark}[theorem]{Remark}

\newcommand{\qone}{\textcolor{purple}{\textbf{Q1}}}
\newcommand{\qtwo}{\textcolor{purple}{\textbf{Q2}}}
\newcommand{\qthree}{\textcolor{purple}{\textbf{Q3}}}

\usepackage[disable,textsize=tiny]{todonotes}
\usepackage{enumitem}
\usepackage{thmtools} 
\usepackage{thm-restate}
\usepackage{latexsym}
\usepackage{amsmath}
\usepackage{amssymb}
\usepackage{bm}
\usepackage{mathrsfs}
\usepackage{url}

\newcommand{\ab}{\bm{a}}
\newcommand{\sbb}{\bm{s}}

\newcommand{\Acal}{\mathcal{A}}

\newcommand{\Lcal}{\mathcal{L}}

\newcommand{\Scal}{{\mathcal{S}}}

\newcommand{\EE}{\mathbb{E}} 
\newcommand{\RR}{\mathbb{R}} 
\newcommand*{\argmax}{\mathop{\mathrm{argmax}}}

\ifx\BlackBox\undefined
\newcommand{\BlackBox}{\rule{1.5ex}{1.5ex}}  
\fi

\ifx\QED\undefined
\def\QED{~\rule[-1pt]{5pt}{5pt}\par\medskip}
\fi

\ifx\proof\undefined
\newenvironment{proof}{\par\noindent{\bf Proof\ }}{\hfill\BlackBox\\[2mm]}
\fi

\ifx\theorem\undefined
\newtheorem{theorem}{Theorem}

\newtheorem{lemma}[theorem]{Lemma}
\newtheorem{proposition}[theorem]{Proposition}
\newtheorem{remark}[theorem]{Remark}

\fi

\ifx\axiom\undefined
\newtheorem{axiom}[theorem]{Axiom}
\fi

\newcommand{\ie}{\emph{i.e.}}

\newcommand{\rbr}[1]{\left(#1\right)}

\renewcommand{\url}[1]{{\sffamily #1}}

\renewcommand{\leq}{\leqslant}
\renewcommand{\le}{\leqslant}
\renewcommand{\geq}{\geqslant}
\renewcommand{\ge}{\geqslant}

\newcommand{\secref}[1]{Section~\ref{#1}}

\newcommand{\appref}[1]{Appendix~\ref{#1}}

\definecolor{highlightgray}{gray}{0.93}
\definecolor{sippo_std}{HTML}{F6E1C4} 
\definecolor{sippo_neg}{HTML}{D1E2F3} 

\usepackage{amsmath}
\usepackage{algorithm}
\usepackage{algorithmic}
\usepackage{listings}
\usepackage{xcolor}
\usepackage{amssymb}
\usepackage{mathtools}
\usepackage{amsthm}
\usepackage{booktabs}
\usepackage{multirow}
\usepackage{xspace}
\usepackage{siunitx}    
\usepackage[table]{xcolor} 
\usepackage{wrapfig}
\usepackage{enumitem}
\usepackage{xspace}

\usepackage{xcolor}
\usepackage[framemethod=TikZ]{mdframed}
\usepackage{amsthm}

\newtheorem{exmp}{Example}[section]

\newmdenv[
  backgroundcolor=gray!10,
  linecolor=gray!40,
  linewidth=0.5pt,
  innertopmargin=4pt,
  innerbottommargin=2pt,
  innerleftmargin=4pt,
  innerrightmargin=4pt,
  roundcorner=2pt,
]{shadedbox}

\newenvironment{myexample}
  {\begin{shadedbox}\begin{exmp}}
  {\end{exmp}\end{shadedbox}}

\newcommand{\boldcell}[2]{%
  \bfseries $\mathbf{#1} \pm #2$
}

\newcommand{\underlinecell}[2]{%
  \bfseries $\underline{#1} \pm #2$
}

\usepackage[capitalize,noabbrev]{cleveref}

\title{GeMPO: Generalized Measure Matching for Online Diffusion Reinforcement Learning}

\author{%
  Haitong Ma\thanks{Equal contribution.} \\
  Harvard University \\
  \And
  Chenxiao Gao\footnotemark[1] \\
  Georgia Institute of Technology \\
  \And
  Tianyi Chen \\
  Georgia Institute of Technology \\
  \AND
  Na Li \\
  Harvard University \\
  \And
  Bo Dai \\
  Georgia Institute of Technology \\
}

\begin{document}

\maketitle

\newcommand{\algbb}{GeMPO\xspace}

\begin{abstract}

A commonly used family of RL algorithms for diffusion policies conducts softmax reweighting over samples from the behavior policy, which often induces an over-greedy policy and fails to utilize feedback from negative samples. In this work, we introduce \algbb, a simple and unified framework that generalizes reweighting scheme in diffusion RL from softmax to general monotonic functions. \algbb revisits diffusion RL via a measure matching perspective: \textit{First}, we construct a virtual target policy measure via solving a regularized policy optimization objective; \textit{Second}, we minimize the divergence between the current policy and this target measure through reweighted flow matching. This formulation offers two key advantages: \textbf{i)} It extends weight design beyond traditional exponential reweighting, allowing it to be tailored to diverse reward landscapes; and \textbf{ii)} by relaxing the non-negativity constraint on the target measure, our framework provides a principled justification for negative reweighting. We provide interpretations of how negative reweighting actively repels the policy from suboptimal actions and thus facilitates exploration. Extensive empirical evaluations demonstrate that \algbb achieves competitive or superior performance by leveraging these flexible weighting schemes, and we provide practical guidelines for selecting reweighting methods in practice.
\end{abstract}

\vspace{-4mm}
\section{Introduction}

Diffusion models~\cite{ho2020denoising,song2019generative} and flow models~\cite{liu2022flow,lipman2022flow} have emerged as a dominant class of generative models for synthesizing high-fidelity data across various modalities, from photorealistic imagery to complex structural biology. Beyond traditional density estimation, there is an increasing demand to align these models with specific downstream objectives, such as human preferences~\cite{rafailov2023direct}, biological evaluations~\cite{avsec2021effective}, and real-world physical feedback~\cite{lei2025rl}. Consequently, Reinforcement Learning (RL) for diffusion models has become a critical research direction to achieve the downstream alignment.   

The existing RL algorithms for diffusion models generally fall into two categories~\cite{gao2026flowrl}. One prominent direction treats the denoising reverse process as a Markov Decision Process (MDP) and optimizes the model using policy gradient~\cite{black2023training,fan2023dpok,ren2024diffusion,liu2025flow,zhang2025reinflow} or reparameterization~\cite{wang2024diffusion,wang2025enhanced,dime}, which are computationally intensive. Furthermore, they usually mandate the use of Stochastic Differential Equation (SDE) samplers during training, which may not only require a very different training infrastructure from deployment, but also compromises the deployment performance since the sampler may differ. 

An alternative RL paradigm, also known as advantage weighted regression (AWR) \citep{peng2019advantage,nair2020awac}, reinforces the current policy by regressing toward behavior samples using weights that are exponential to their expected returns. Extending this concept to diffusion RL, \citet{ma2025efficient} introduced a suite of efficient algorithms that solve \textit{reweighted flow matching} problems, maintaining high compatibility with existing diffusion model training infrastructures. However, this paradigm tends to assign disproportionately high weights to a small subset of high-advantage samples while assigning negligible weights to the remainder, leading to over-greedy policy updates~\citep{mei2020escaping} and a failure to leverage information from negative samples~\cite{frans2025diffusion}. 

To mitigate the inherent limitations of exponential reweighting, we introduce Generalized Measure Matching for Diffusion Policy Optimization (\algbb), a unified framework that significantly broadens the scope of reweighted flow matching through flexible weighting function design. We reformulate the derivation of reweighted flow matching into two logical stages: \textit{First}, a target policy measure is constructed by solving a regularized RL objective; \textit{Second}, we align the current policy with the target measure through reweighted flow matching. This perspective enables \algbb to not only subsume existing weighting designs~\citep{ding2024diffusion,ma2025efficient,tang2025wd1} as special cases but also to decouple the framework from rigid exponential constraints, allowing the reweighting to be tailored to specific reward landscapes. Furthermore, by relaxing the standard non-negativity constraint on the target measure, \algbb can accommodate a broad class of monotone reweighting functions, even the ones with negative weights. We provide interpretations on how this mechanism actively repels the policy from suboptimal regions, addressing a critical drawback of conventional designs. Extensive evaluations across multi-armed bandits, MuJoCo locomotion, and DNA sequence generation demonstrate that \algbb achieves competitive or superior performance by leveraging these flexible reweighting schemes.

Due to space limit, more related works is presented in Appendix~\ref{sec.related_works}.

\section{Preliminaries}
\label{sec.preliminaries}
\paragraph{KL-regularized Policy Optimization.}We consider the Markov Decision Process (MDP)~\citep{puterman2014markov} specified by a tuple $\mathcal{M}=(\mathcal{S}, \mathcal{A}, r, P, \mu_0, \gamma)$, where $\mathcal{S}$ is the state space, $\mathcal{A}$ is the action space, $r:\Scal\times\Acal\to\RR$ is a reward function,
$P: \mathcal{S} \times \mathcal{A} \rightarrow \Delta(\mathcal{S})$ is the transition operator, $\mu_0 \in \Delta(\mathcal{S})$ is the initial distribution and $\gamma \in(0,1)$ is the discount factor. We define $Q^{\pi}(\sbb, \ab)=\EE_{\pi}[\sum_{t=0}^\infty \gamma^t r(\sbb_t,\ab_t)|\sbb_0=s,\ab_0=a]$ as the state-action value function. Consider optimizing a policy $\pi:\mathcal{S}\to\Delta(\mathcal{A})$, via the KL-regularized objective function,
\begin{equation}\label{eq:pmd}
\pi = \argmax_{\pi\in\Pi} \EE_{\ab \sim \pi} [Q^{\pi_{\text{old}}}(\sbb, \ab)] - \lambda D_{\mathrm{KL}}(\pi \| \pi_{\text{old}}; \sbb),
\end{equation}
where $\Pi\doteq\{\pi(\cdot|s)|\int\pi(\cdot|s)=1,\pi(\cdot|s)\geq 0,\forall s\in\mathcal{S}\}$ is the policy family mapping states to probability distributions on the action space. $\pi_{\text{old}}$ is the current policy, $\lambda\in\mathbb{R}^+$ is the regularization strength, and the Kullback–Leibler (KL) divergence $D_{\rm KL}(p||q) = \mathbb{E}_{x\sim p}[\log\frac{p(x)}{q(x)}]$ serves to constrain the updated policy within a trust region around the old policy $\pi_{\text{old}}$, similar to proximity-based algorithms like TRPO~\cite{schulman2015trust} and PPO~\cite{schulman2017proximal}. 
The closed-form optimal solution in policy family $\Pi$ to this objective is given by an exponential reweighting of the current policy:
\begin{equation}
\pi^*(\ab|\sbb) \propto \pi_{\text{old}}(\ab|\sbb) \exp(Q^{\pi_{\text{old}}}(\sbb, \ab) / \lambda). \label{eq.softmax_reweighting}
\end{equation}


\paragraph{Diffusion and Flow Models.}
Diffusion models~\cite{ho2020denoising,song2019generative} or flow models~\cite{lipman2022flow,liu2022flow} learn data distributions by gradually perturbing clean data $\boldsymbol{x}_0 \sim p_{\text {data }}$ with Gaussian noise according to a forward process:
$$
\boldsymbol{x}_t=\alpha_t \boldsymbol{x}_0+\sigma_t \boldsymbol{\epsilon}, \boldsymbol{\epsilon} \sim \mathcal{N}(\mathbf{0}, \mathbf{I}),
$$
which emits the conditional distribution $q_{t|0}(\boldsymbol{x}_t|\boldsymbol{x}_0)=\mathcal{N}(\boldsymbol{x}_t; \alpha_t\boldsymbol{x}_t, \sigma_t^2)$. It then learns the score or velocity function to reverse this process to sample from the target distribution $p_{\text{data}}$. We employ the $\boldsymbol{v}$-prediction formulation~\citep{lipman2022flow,lipman2024flow} in this paper, where the velocity $\boldsymbol{v}$ is defined as the time-derivative of $\boldsymbol{x}$, \ie, $\boldsymbol{v}=\dot{\alpha}_t\boldsymbol{x}_0+\dot{\sigma}_t\boldsymbol{\epsilon}$. They fit a parameterized network $D_\theta(\boldsymbol{x}_t, t)$ to match the \textit{marginal} velocity field:
\begin{equation}\label{eq:unweighted_closed_form_velocity}
\boldsymbol{v}_t(\boldsymbol{x}_t)=\mathbb{E}_{\boldsymbol{x}_0\sim p_{0|t}(\cdot|\boldsymbol{x}_t)}\left[\boldsymbol{v}_{t|0}(\boldsymbol{x}_t|\boldsymbol{x}_0)\right],
\end{equation}
where $p_{0|t}(\boldsymbol{x}_0|\boldsymbol{x}_t)\doteq\frac{q_{t|0}(\boldsymbol{x}_t|\boldsymbol{x}_0)p_{\rm data}(\boldsymbol{x}_0)}{\int q_{t|0}(\boldsymbol{x}_t|\boldsymbol{x}_0)p_{\rm data}(\boldsymbol{x}_0)d\boldsymbol{x}_0}$ is the posterior distribution, $\boldsymbol{v}_{t|0}(\boldsymbol{x}_t|\boldsymbol{x}_0)$ is the \textit{conditional} velocity field:
$$
\boldsymbol{v}_{t|0}(\boldsymbol{x}_t|\boldsymbol{x}_0)=\frac{(\sigma_t\dot{\alpha}_t-\dot{\sigma}_t\alpha_t)}{\sigma_t}\boldsymbol{x}_0+\frac{\dot{\sigma}_t}{\sigma_t}\boldsymbol{x}_t.
$$
However, the marginal velocity field is not accessible due to the intractability of the posterior distribution, therefore in practice, we employ the conditional velocity field matching, which produces equivalent gradients as matching the marginal velocity field:
$$
\mathcal{L}(\theta)=\mathbb{E}_{\boldsymbol{x}_0\sim p_{\rm data}, \boldsymbol{\epsilon}\sim\mathcal{N}}\left[\|D_\theta(\boldsymbol{x}_t, t)-\boldsymbol{v}_{t|0}\|^2\right].
$$
After training is complete, sampling is done by solving a stochastic or ordinary differential equation (SDE/ODE):
\begin{equation}
    \begin{aligned}
\mathrm{d}\boldsymbol{x}_t = \left( (1+\eta^2)\boldsymbol{v}_t(\boldsymbol{x}_t) - \eta^2\frac{\dot{\alpha}_t}{\alpha_t}\boldsymbol{x}_t \right) \mathrm{d}t + \eta\sqrt{2\sigma_t\dot{\sigma}_t - 2\frac{\dot{\alpha}_t}{\alpha_t}\sigma_t^2}\mathrm{d}\bar{\boldsymbol{w}}_t,
    \end{aligned}
\end{equation}
where $\eta\geq 0$ controls the stochasticity. Different specifications of the noise schedule $\{(\alpha_t, \sigma_t)\}$ lead to different methods. For example, the variance-exploding (VE) diffusion model~\cite{song2019generative} has the schedule $\alpha_t = 1, \sigma_t = \sqrt{t}$. 
Flow models~\cite{lipman2022flow,liu2022flow} employ linear interpolation with $\alpha_t=1-t, \sigma_t=t$, which simplifies the conditional velocity target to $\boldsymbol{v}_{t|0}=\boldsymbol{\epsilon}-\boldsymbol{x}_0$. Note that diffusion models may also employ $\boldsymbol{x}_0$- or $\boldsymbol{\epsilon}$-prediction, which are equivalent to $\boldsymbol{v}$-prediction used in this paper due to the reparameterization among the three \citep{li2025back}.

\paragraph{$f$-divergence}~describes the difference between two probability distributions~\citep{csiszar2004information,liese2006divergences,ali1966general}. For a convex function $f: \mathbb{R}^{+} \rightarrow \mathbb{R}$ with $f(1)=0$, the corresponding $f$-divergence for two distributions $p, q$ is defined as:
\begin{equation}
    \textstyle
    D_f(p \| q)=\mathbb{E}_{x\sim q}\left[f\left(\frac{p(x)}{q(x)}\right)\right],    \label{eq:f-div}
\end{equation}
where $f$ is called the generator function. 
Different choices of $f$-divergence can cover a wide class of popular divergences, which will be explained later in \Cref{tab:f-divergence} in~\Cref{sec.f-div}.
%


%

\paragraph{Signed Measures and Notations.} $p, q$ in~\eqref{eq:f-div} are probability distributions, \ie, non-negative measures with total unit mass. Signed measures extend this notion, which may 
assign negative mass to some sets, provided the total magnitude of its mass is finite. 
In contrast to $\Delta(\mathcal{A})$ as the space of probability measures over the action
space $\mathcal{A}$, we let $\mathcal{M}_{\pm}(\mathcal{A})$ denote the space of signed measures over $\mathcal{A}$. 
The definition~\eqref{eq:f-div} can be extended to cases where $p$ is a finite signed measure~\cite{csiszar2002mem,broniatowski2006minimization} and $q$ remains a probability measure. This extension is well-defined under mild conditions\footnote{Such as $f$ being a measurable function on $\mathbb{R}$. Detailed derivations are provided in Appendix~\ref{sec.f_div_signed_measure}.}. 

\section{\algbb: Revisit Diffusion RL from Generalized Measure Matching}

In this section, we revisit diffusion RL from a measure matching perspective. Specifically, diffusion RL can be understood as first constructing the target policy measure which solves the regularized RL objective, and then matching the current policy with the target measure (\secref{sec.two-stage}). This naturally leads to a reweighted flow matching objective, where the weighting function depends on the formulation of the target measure. Instantiated with various $f$-divergence metrics, this framework subsumes existing algorithms as special cases while also inspiring novel weighting schemes (\secref{sec.f-div}). Furthermore, by relaxing the standard non-negativity constraint on the target measure, we introduce negative reweighting, which repels the policy from suboptimal actions and facilitates exploration (\secref{sec.negative-weighting}).

\subsection{Diffusion RL via Measure Matching}
\label{sec.two-stage}

We consider a generalized version of the regularized policy optimization problem \eqref{eq:pmd}:
\begin{equation}\label{eq.problem_formulation}
    \begin{aligned}
        \max_{\pi\in\Pi}~& \EE_{\pi}[Q(\sbb, \ab)] -\lambda  D_f(\pi\|\pi_{\rm old}).
    \end{aligned}
\end{equation}
Here, we assume $f:\RR^+\to \RR$ is strictly convex and differentiable and $f(1)=0$. Recall that the policy family $\Pi $ here enforces the normalization constraint $\int \pi(\ab |\sbb)\mathrm{d}\ab = 1$ and the non-negativity constraint $\pi(\ab |\sbb) \ge 0$ for all $\sbb \in \Scal$ and $\ab\in\Acal$. 

\begin{proposition}[Optimal solution to~\eqref{eq.problem_formulation} in $\Pi$.]
    The optimal policy admits the form
\begin{equation}\label{eq.optimal_solution_f_divergence}
\pi^*(\ab | \sbb) = \pi_{\rm old}(\ab | \sbb){g}_f\left(\frac{Q(\sbb, \ab) - \nu(\sbb)}{\lambda}\right),
\end{equation}
where $\nu:\mathcal{S}\to \mathbb{R}$ are multipliers whose actual values depend on $\lambda$ and $Q$, and ${g}_f$ is the \emph{clipped inverse derivative function} satisfying ${g}_f(x)=(f')^{-1}(x)\text{ if }x>f'(0)\text{ otherwise } 0$.
\end{proposition}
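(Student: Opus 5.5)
Our plan is to treat \eqref{eq.problem_formulation} state by state as a convex program over the density $\pi(\cdot|\sbb)$ and read off the optimizer from the KKT conditions. Fix $\sbb$ and substitute $w(\ab)=\pi(\ab|\sbb)/\pi_{\rm old}(\ab|\sbb)$. The problem becomes
\[
\max_{w\ge 0}\; \EE_{\ab\sim\pi_{\rm old}}\big[w(\ab)Q(\sbb,\ab)-\lambda f(w(\ab))\big]\quad\text{s.t.}\quad \EE_{\ab\sim\pi_{\rm old}}[w(\ab)]=1 .
\]
Here we restrict to $\pi\ll\pi_{\rm old}$, since otherwise $D_f$ is infinite or undefined. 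The objective is concave in $w$ because $f$ is strictly convex, and the constraints are affine. So Lagrangian duality applies (Slater holds at $w\equiv 1$), and the KKT conditions are sufficient for optimality.

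We introduce a multiplier $\nu(\sbb)$ for the normalization constraint and form
\[
\mathcal{L}(w,\nu)=\EE_{\pi_{\rm old}}\big[wQ-\lambda f(w)-\nu w\big]+\nu .
\]
Because the objective is an integral of a pointwise function of $w(\ab)$, maximizing $\mathcal{L}$ over $w\ge 0$ for fixed $\nu$ decouples across actions. For each $\ab$ we solve the scalar problem $\max_{w\ge 0}\, w(Q-\nu)-\lambda f(w)$. This is a strictly concave function of $w$ on $[0,\infty)$, and its derivative is $(Q-\nu)-\lambda f'(w)$. Since $f'$ is strictly increasing, there are two cases:
\begin{itemize}
\item If $(Q-\nu)/\lambda>f'(0)$, the stationary point $w=(f')^{-1}((Q-\nu)/\lambda)$ is positive and is the maximizer. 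If the value lies outside the range of $f'$, we interpret this via the extended inverse.
\item If $(Q-\nu)/\lambda\le f'(0)$, the derivative is nonpositive on all of $[0,\infty)$. The maximizer is the boundary point $w=0$, which corresponds to the complementary slackness multiplier for $w\ge 0$ being active.
\end{itemize}
Together the two cases give exactly $w^*=g_f((Q-\nu)/\lambda)$.

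Finally, $\nu(\sbb)$ is chosen so that $\EE_{\pi_{\rm old}}[g_f((Q-\nu)/\lambda)]=1$. The map $\nu\mapsto \EE[g_f(\cdot)]$ is continuous and nonincreasing, since $g_f$ is nondecreasing. It tends to $0$ as $\nu\to\infty$ and grows large as $\nu\to-\infty$, at least when the range of $(f')^{-1}$ is unbounded. The intermediate value theorem then yields a suitable $\nu$, so the primal–dual pair satisfies KKT and $\pi^*=\pi_{\rm old}\,g_f((Q-\nu)/\lambda)$ is optimal. Strict convexity additionally gives uniqueness $\pi_{\rm old}$-a.e.

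The main obstacle is making the infinite-dimensional duality and the existence of $\nu$ rigorous. This requires measurability and integrability of $Q$ under $\pi_{\rm old}$, and care at the boundary $f'(0)$, which may equal $-\infty$ (as for KL, where no clipping occurs). We expect to handle the first point by verifying optimality directly rather than by invoking strong duality abstractly. For any feasible $w$, pointwise concavity gives
\[
wQ-\lambda f(w)-\nu w\le w^*Q-\lambda f(w^*)-\nu w^* .
\]
Integrating against $\pi_{\rm old}$ and using $\EE[w]=\EE[w^*]=1$ then shows that $w^*$ is optimal. This sidesteps constraint qualifications entirely, leaving only the existence of $\nu$ as an assumption.
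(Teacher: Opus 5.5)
Your proposal is correct and follows essentially the same route as the paper's appendix derivation. Both introduce a multiplier $\nu$ for normalization, impose the pointwise first-order condition, and split into two cases at $f'(0)$ to obtain the clipped inverse $g_f$. The paper handles nonnegativity with multipliers $\eta(\ab)\ge 0$ and complementary slackness, and works on the $\epsilon$-constrained discrete problem; you instead build nonnegativity into the domain of the scalar maximization and work on the penalized problem as stated.

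Your version is also more complete than the paper's. Integrating the pointwise Lagrangian inequality proves the candidate is actually optimal, and your intermediate-value argument supplies $\nu$, whereas the paper only derives necessary stationarity conditions and takes the existence of a KKT point for granted.
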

The detailed derivation is deferred to Appendix~\ref{sec.apdx.derivation.optimal_solution}. 

The next step is to match $\pi^*$ using diffusion-parameterized policy $\pi_\theta\in\Pi_\theta\subseteq \Pi$, where $\Pi_\theta$ is policy family parameterized by diffusion models $D_\theta$ introduced in~\Cref{sec.preliminaries}.
A simple and commonly used technique is reweighted flow matching \citep{ding2024diffusion,ma2025efficient}:
\begin{equation}\label{eq:weighted_matching}
    \mathcal{L}(\theta)=\mathbb{E}_{\substack{\sbb\sim\mathcal{B}\\\ab_0\sim\pi_{\rm old}\\\epsilon\sim\mathcal{N}}}\left[w(\sbb, \ab_0)\|D_\theta(\sbb,\ab_t, t) - \boldsymbol{v}_{t|0}\|^2\right], \text{where}~w(\sbb, \ab_0)\doteq{g}_f\left(\frac{Q(\sbb, \ab_0) - \nu(\sbb)}{\lambda}\right).
\end{equation}
where $\mathcal{B}$ is the replay buffer, $\ab_t=\alpha_t\ab_0+\sigma_t\epsilon$, and $w(\sbb, \ab_0)$ is the reweighting factor in \eqref{eq.optimal_solution_f_divergence}. 
In Appendix~\ref{apdx.closed_form}, we show that reweighted flow matching recovers the velocity field of the target policy measure, and the optimal diffusion parameterization $D^*_\theta$ with respect to~\eqref{eq:weighted_matching} is a weighted average of conditional velocities under the posterior distribution $p_{0 \mid t}(\cdot|\ab_t)$:
\begin{equation}
    D^*_\theta(\sbb,\ab_t, t) = \frac{\mathbb{E}_{\ab_0 \sim p_{0|t}} [ w(\sbb,\ab_0) \boldsymbol{v}_{t|0}(\ab_t|\ab_0) ]}{\mathbb{E}_{\ab_0 \sim p_{0|t}} [ w(\sbb,\ab_0) ]}\label{eq.closed_form_solution_reweighting}
\end{equation}
Compared to the unweighted formulation \eqref{eq:unweighted_closed_form_velocity}, this solution reweights the conditional velocity according to its value, thereby emphasizing actions with higher advantage. 

Beyond reweighted flow matching, there exist other methods to align the current policy with the target measure, such as contrastive learning. We adopt reweighted flow matching for its simplicity and discuss alternative options in \appref{app.nce}.



\subsection{Unifying Existing Reweighting Schemes via $f$-divergences}
\label{sec.f-div}

\begin{table}[h]
    \centering
\small{
    \caption{Examples of $f$-divergence, their generator functions, and resulting reweighting factors.}
    \label{tab:f-divergence}
    \centering
    \small{
    \begin{tabular}{llllc}
    \toprule
 Divergence $D_f$ & Generator $f(x)$ & reweighting factor $g_f(\cdot)$ \\
 \midrule
 Forward KL Divergence & $x \log x$ & $\exp(x)$\\
 $\chi^2$-Divergence & $x^2-1$ & $x$ \\
 $\alpha$-Divergence ($\alpha>1$) & $\frac{x^\alpha-x}{\alpha(\alpha-1)}$ & $x^{\frac{1}{\alpha - 1}}$  \\
 \bottomrule
\end{tabular}
}
}
\end{table}

By choosing the $f$-divergence as the forward KL divergence, we recover the exponential weighting scheme used in AWR \citep{peng2019advantage} and DPMD \citep{ma2025efficient}.
\begin{myexample}{(DPMD \citep{ma2025efficient} as forward KL-regularization)}
\label{example.kl}
Let $D_f=D_{\rm KL}$, therefore $f(x)=x\log x - x + 1$, $\bar{g}_f(x)= \exp(x)$, and $f'(0)=-\infty$. Recognizing $\nu(\sbb)$ as the normalization factor $Z(\sbb)$, we can recover DPMD with \eqref{eq:weighted_matching}:
$$
\begin{aligned}
    &\mathcal{L}_{\rm DPMD}(\theta)=\mathbb{E}_{\sbb, \ab_0,\epsilon}\left[\frac{\exp(Q(\sbb, \ab))}{Z(\sbb)}\Big\|D_\theta(\sbb,\ab_t, t) - \boldsymbol{v}_{t|0}\Big\|^2\right].
\end{aligned}
$$
\end{myexample}

Moreover, we can see QVPO~\cite{ding2024diffusion}, originally derived from a lower bound of policy gradient objective, as a special case of specifying $D_f$ as $\chi^2$ divergence.

\begin{myexample}{(QVPO~\cite{ding2024diffusion} as $\chi^2$ regularization)}
\label{example.relu_linear}
Let $D_f$ be the $\chi^2$-divergence by setting $f(x)=x^2-1$, $\bar{g}_f(x)=\frac{1}{2}x$, and $f'(0)=0$. The QVPO objective
$$
\begin{aligned}
    \mathcal{L}_{\rm QVPO}(\theta)=\mathbb{E}_{\sbb, \ab_0,\epsilon}\left[[Q(\sbb, \ab)- V(\sbb)]_+\|D_\theta(\sbb,\ab_t, t) - \boldsymbol{v}_{t|0}\|^2\right]
\end{aligned}
$$
can be recovered by \eqref{eq:weighted_matching} through fixing $\lambda=\frac 12$ and  $[\cdot]_+=\max(\cdot, 0)$.
\end{myexample}

By recognizing KL and $\chi^2$ divergences as specific instantiations of the $\alpha$-divergences, we can extend our framework to a more generalized regularization family. 

\begin{myexample}{(Power function reweighting as $\alpha$-divergence regularization)}
For general $\alpha$-divergences with $\alpha > 1$, we have $f(x) = \frac{x^\alpha-1}{(\alpha-1)\alpha}$, which induces $f'(0)=0$ and $\bar{g}_f(x)=(\alpha-1)x^{\frac{1}{\alpha-1}}$. The target policy can be therefore defined as
$$
\pi^*(\ab|\sbb)\propto \pi_{\rm old}(\ab|\sbb)\left[\frac{Q(\sbb, \ab)-\nu(\sbb)}{\lambda}\right]_+^{\frac{1}{\alpha-1}},
$$
where certain constant factors are absorbed into $\lambda$. This suggests the following objective:
$$
\begin{aligned}
    &\mathcal{L}_{\alpha}(\theta)=\mathbb{E}_{\sbb, \ab_0,\epsilon}\left[\left[\frac{Q(\sbb, \ab)-\nu(\sbb)}{\lambda}\right]_+^{\frac{1}{\alpha-1}}\Big\|D_\theta(\sbb,\ab_t, t) - \boldsymbol{v}_{t|0}\Big\|^2\right].
\end{aligned}
$$
\end{myexample}

By tuning $\alpha$, we can shape the curvature of the weighting function to better match the characteristics of the underlying value landscape. With an appropriate choice of $\alpha$, our approach can avoid both the overly flat weighting used in QVPO (Example \ref{example.relu_linear}) and the statistical instabilities induced by the exponential weighting in DPMD (Example \ref{example.kl}).



\subsection{Extension to Negative Reweighting}
\label{sec.negative-weighting}

Current policy family $\Pi$ strictly enforces the non-negativity constraint $\pi^*(\ab|\sbb) \geq 0$, which consequently forces the weights $g_f(\cdot)$ or $w(\cdot, \cdot)$ to be non-negative. In practice, this assigns infinitesimal or zero weight to suboptimal samples. As a result, these methods fail to exploit negative samples during training and are prone to getting trapped in local optima. 

Our measure matching perspective relaxes the policy family in~\eqref{eq.problem_formulation} to mapping from states to normalized signed measures $\widetilde{\Pi}\doteq\{\tilde{\pi}:\mathcal{S\to\mathcal{M}_{\pm}(\mathcal{A})}|\int\tilde{\pi}(a|s)=1,\forall s\}\supseteq \Pi$. 

\begin{proposition}[Optimal solution to~\eqref{eq.problem_formulation} in $\widetilde{\Pi}$]
    We extend $D_f$ in~\eqref{eq.problem_formulation} to allow $p$ being signed measures and policy family to be $\widetilde{\Pi}$. Then the optimal solution $\tilde\pi^*\in \widetilde{\Pi}$ to~\eqref{eq.problem_formulation} is: 
\begin{equation}\label{eq.optimal_solution_relaxed}
\begin{aligned}
    \widetilde{\pi}^*(\ab | \sbb) &\propto \pi_{\rm old}(\ab | \sbb)\tilde g_f\left(\frac{Q(\sbb, \ab) - \nu(\sbb)}{\lambda}\right),\\
\end{aligned}
\end{equation}
where $\tilde g_f(\cdot)$ is a monotonic increasing function and \textbf{not required to be non-negative}. The derivation is deferred to \Cref{sec.apdx.derivation.optimal_solution_signed}.
\end{proposition}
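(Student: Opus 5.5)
The plan is to redo the Lagrangian argument behind the optimal-solution proposition for $\Pi$, now over $\widetilde{\Pi}$. The only difference is that the pointwise non-negativity constraint is gone, so its KKT multiplier (which produced the clipping at $f'(0)$) disappears.

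\textbf{Setup.} Fix a state $\sbb$ and write the candidate as a density ratio $\tilde\pi(\ab|\sbb)=\pi_{\rm old}(\ab|\sbb)\,r(\ab)$ with $r:\Acal\to\RR$. We assume $\tilde\pi\ll\pi_{\rm old}$; any mass singular to $\pi_{\rm old}$ is handled by the usual recession convention for $f$-divergences and is never beneficial under superlinear growth. Using the signed-measure extension of $D_f$ from Appendix~\ref{sec.f_div_signed_measure}, with $f$ extended to a strictly convex differentiable function on $\RR$ (or on the relevant domain), the objective becomes
\[
J(r)=\EE_{\ab\sim\pi_{\rm old}}\big[r(\ab)Q(\sbb,\ab)-\lambda f(r(\ab))\big],
\]
subject only to $\EE_{\pi_{\rm old}}[r]=1$. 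The functional $J$ is concave in $r$, since it is linear minus a convex term, and the constraint is affine. The problem therefore decouples across states, and the Lagrangian with a single multiplier $\nu(\sbb)$ is
\[
\EE_{\pi_{\rm old}}\big[rQ-\lambda f(r)-\nu(\sbb)\,r\big]+\nu(\sbb).
\]

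\textbf{Pointwise maximization.} With the sign constraint removed, we maximize $r\mapsto r\,(Q-\nu)-\lambda f(r)$ separately for each $\ab$ over the whole extended domain. The first-order condition is $f'(r)=(Q(\sbb,\ab)-\nu(\sbb))/\lambda$. Strict convexity makes $f'$ strictly increasing, so its inverse exists on the range of $f'$ and is itself increasing. Setting $\tilde g_f:=(f')^{-1}$ gives $r^*(\ab)=\tilde g_f\big((Q(\sbb,\ab)-\nu(\sbb))/\lambda\big)$. This function is monotone increasing and can take negative values wherever $f'$ is defined on negative arguments. In the constrained case the KKT slackness forced $r=0$ below $f'(0)$, which produced $g_f$; here no such truncation occurs. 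Concavity gives sufficiency: a stationary point of the concave Lagrangian that satisfies the constraint is a global maximizer. Choosing $\nu(\sbb)$ to solve $\EE_{\pi_{\rm old}}[\tilde g_f((Q-\nu)/\lambda)]=1$ then yields~\eqref{eq.optimal_solution_relaxed}. Existence and uniqueness of $\nu(\sbb)$ follow from the intermediate value theorem, because the map $\nu\mapsto\EE[\tilde g_f((Q-\nu)/\lambda)]$ is continuous and strictly decreasing. The proportionality sign in the statement simply absorbs this normalization.

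\textbf{Main obstacle.} The delicate step is making the extension of $f$ and $D_f$ to negative ratios rigorous. The extended $f$ must remain strictly convex and differentiable, and its derivative's range must be rich enough that $(f')^{-1}$ is defined at every $(Q-\nu)/\lambda$. It must also grow fast enough, for example coercively, that the pointwise supremum is attained rather than $+\infty$, and that some $\nu$ achieves unit total mass. For example, for $\chi^2$ with $f(x)=x^2-1$ on all of $\RR$ we get $\tilde g_f(x)=x/2$, an unclipped linear weight, and all these conditions are easy to verify. The plan is therefore to state these growth and range conditions as the ``mild conditions'' of Appendix~\ref{sec.f_div_signed_measure}, and to verify integrability of $r^*Q$ and $f(r^*)$ under $\pi_{\rm old}$ using boundedness of $Q$.
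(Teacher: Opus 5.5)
Your proposal is correct and follows essentially the same route as the paper's derivation in Appendix~\ref{sec.apdx.derivation.optimal_solution_signed}: a Lagrangian with only the normalization multiplier $\nu(\sbb)$, whose stationarity condition gives $\tilde\pi^*=\pi_{\rm old}\,(f')^{-1}\big((Q-\nu)/\lambda\big)$, with no clipping at $f'(0)$ because the nonnegativity multiplier is gone. You add rigor the paper skips (pointwise concave maximization for global optimality, existence and uniqueness of $\nu$, growth and range conditions on the extended $f$), while the paper adds the converse remark that any strictly increasing $g$ is realized as $(f_g')^{-1}$ with $f_g(x)=\int^x g^{-1}(t)\,\mathrm{d}t+C$.
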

 
Correspondingly, using reweighted flow matching \eqref{eq:weighted_matching} $w(\sbb, \ab)=\tilde g_f\left((Q(\sbb, \ab) - \nu(\sbb))/\lambda\right)$ might be negative. This result justifies the use of negative weights and indicates that existing negative-aware reweighting methods fit precisely within our measure matching framework.

\begin{myexample}[\textbf{WD1}~\cite{tang2025wd1} as an example of negative reweighting] \textbf{\emph{WD1}}~\cite{tang2025wd1} also incorporates negative reweighting to enhance the reasoning capability of diffusion language models. Their weighting function is defined as
$$
    \tilde g_f(\hat A(\sbb,\ab_i)) = \operatorname{softmax}(\hat A(\sbb,\ab_i)) - \operatorname{softmax}(-\hat A(\sbb,
\ab_i))\label{eq.reweighting_wd1}
$$
where $\operatorname{softmax}(\hat{A}(\sbb,\ab_i)) = \frac{\exp (\hat A(\sbb, \ab_i))}{\sum_i\exp(\hat A(\sbb, \ab_i))}$ and $\hat{A}(\sbb,\ab_i) = \frac{R(\sbb, \ab_i) - \operatorname{mean}(R(\sbb,\ab_{1:G}))}{\operatorname{std}(R(\sbb,\ab_{1:G}))}$ is the group-relative advantage across a total number of $G$ responses to the same query $\sbb$. 
Note that, $g(x)$ here is exactly monotonic increasing with respect to the advantage function.
\end{myexample}

Finally, we show that even without non-negativity constraints enforced, the target policy still improves over the current policy $\pi_{\rm old}$ as long as it is normalized. 

\begin{theorem}[Improvement of the signed target measure, proof in Appendix \ref{sec.apdx.policy_improvement}]
\label{thm:policy_improvement}
Assume $\tilde g_f$ is strictly increasing, the signed target policy measure $\tilde \pi^*$  has policy improvement over $\pi_{\rm old}$, \ie, $\EE_{\tilde \pi^*}[Q(\sbb, \ab)]\geq \mathbb{E}_{\pi_{\rm old}}[Q(\sbb, \ab)]$ for $\forall s\in\Scal$. 
\end{theorem}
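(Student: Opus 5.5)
Fix a state $\sbb$ and write $Q(\ab)=Q(\sbb,\ab)$, $\nu=\nu(\sbb)$ and $h(\ab)=\tilde g_f\bigl((Q(\ab)-\nu)/\lambda\bigr)$. By \eqref{eq.optimal_solution_relaxed} together with the normalization $\int\tilde\pi^*(\ab|\sbb)\,\mathrm{d}\ab=1$ defining $\widetilde\Pi$, the target measure can be written as $\tilde\pi^*(\ab|\sbb)=\pi_{\rm old}(\ab|\sbb)\,h(\ab)$, with the multiplier $\nu$ (and any proportionality constant absorbed into it) chosen so that $\EE_{\pi_{\rm old}}[h(\ab)]=1$. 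The claim $\EE_{\tilde\pi^*}[Q]\geq\EE_{\pi_{\rm old}}[Q]$ is then equivalent to
\[
\EE_{\pi_{\rm old}}\bigl[h(\ab)Q(\ab)\bigr]-\EE_{\pi_{\rm old}}[h(\ab)]\,\EE_{\pi_{\rm old}}[Q(\ab)]\geq 0,
\]
that is, $\mathrm{Cov}_{\pi_{\rm old}}(h,Q)\geq0$. I will prove this covariance inequality directly.

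\textbf{Key step.} Since $h$ is a monotonically increasing function of $Q$, Chebyshev's association (rearrangement) inequality applies. Take $\ab,\ab'$ i.i.d.\ from $\pi_{\rm old}(\cdot|\sbb)$. Monotonicity gives $(h(\ab)-h(\ab'))(Q(\ab)-Q(\ab'))\geq0$ pointwise. Taking expectations and expanding,
\[
0\leq \EE\bigl[(h(\ab)-h(\ab'))(Q(\ab)-Q(\ab'))\bigr]=2\,\mathrm{Cov}_{\pi_{\rm old}}(h,Q).
\]
Substituting the normalization $\EE_{\pi_{\rm old}}[h]=1$ then yields the improvement. If strictness is wanted, the inequality is strict whenever $Q$ is not $\pi_{\rm old}$-a.s.\ constant, because $\tilde g_f$ is strictly increasing. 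Nonnegativity of $h$ is never used, and this is exactly why signed measures are allowed.

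\textbf{Main obstacle.} The difficulty lies in the bookkeeping rather than the inequality itself. The argument depends on the normalizing constant being \emph{positive}, so that "$\propto$" does not flip the sign. I would handle this by folding the constant into $\nu$ via the normalization constraint from the Lagrangian derivation in \Cref{sec.apdx.derivation.optimal_solution_signed}. Alternatively, I would note that $\tilde\pi^*=\pi_{\rm old}h/\EE[h]$ requires $\EE[h]>0$, and then the covariance argument divided by $\EE[h]$ gives the same result. I also need integrability: $\EE_{\pi_{\rm old}}[|hQ|]<\infty$ and $\EE_{\pi_{\rm old}}[|Q|]<\infty$, which is consistent with $\tilde\pi^*$ being a finite signed measure. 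I would state these as standing regularity assumptions.
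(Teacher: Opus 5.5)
Your proposal is correct and follows essentially the same route as the paper. The paper likewise uses the normalization $\EE_{\pi_{\rm old}}[\tilde g_f(A)]=1$, with $A=(Q-\nu)/\lambda$, to rewrite the improvement gap as $\mathrm{Cov}_{\pi_{\rm old}}[A,\tilde g_f(A)]$, and concludes by monotonicity; your only additions are the explicit i.i.d.-copy proof of Chebyshev's covariance inequality (which the paper merely asserts) and your remarks on the positivity of the normalizing constant and on integrability.
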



\paragraph{The repelling effect caused by negative reweighting.}
In practice, negative reweighting allows the model to leverage low-value samples by explicitly pushing the generation trajectory away from them. Recall the optimal diffusion parameterization in~\eqref{eq.closed_form_solution_reweighting}. When $\mathbb{E}_{a_0 \sim p_{0|t}}[w(s,a_0)] > 0$, the numerator forms a weighted average of conditional velocity fields. For samples with negative weights, their corresponding conditional velocities are effectively reversed in direction. As a result, low-value samples contribute a repulsive velocity component, steering the learned generation path away from suboptimal regions rather than merely assigning them negligible weight. Cases where $\mathbb{E}_{a_0 \sim p_{0|t}}[w(s,a_0)] \leq 0$ can induce an analogous repelling effect, which we discuss in~\Cref{sec.geometric_interp}.

\section{Practical Algorithms}
\label{sec.algorithm}
\begin{algorithm}
\caption{\textbf{Ge}neralized \textbf{M}easure Matching for \textbf{P}olicy \textbf{O}ptimization (\algbb)}\label{alg.main}
\begin{algorithmic}[1]
    \REQUIRE $Q$-function, current policy $\pi_{\rm old}$, regularization strength $\lambda$, number of samples $N$
    \STATE sample $N$ actions $a_i\sim \pi(\cdot|\sbb),i=1,\dots N$ for each $\sbb$
    \STATE Evaluate $Q_i\doteq Q(\sbb,\ab_i),i=1,2,\dots N$
    \STATE Compute the empirical normalization factor $\nu(\sbb)$
    \STATE Compute reweighting weights $w(\sbb,\ab_i)$ with $\nu, Q,\lambda$
    \STATE Train policies by reweighted flow matching~\eqref{eq:weighted_matching}
\end{algorithmic}
\end{algorithm}

The pseudo-code of the \algbb framework is presented in Algorithm \ref{alg.main}. By instantiating the reweighted flow matching with different weighting functions, we study several variants of \algbb in this paper. Specifically, we consider:
1) \textbf{\algbb-Exp}, which uses exponential weighting as DPMD; 2) \textbf{\algbb-Linear} and \textbf{\algbb-Square}, which uses power functions $w(\sbb, \ab)=\left[(Q(\sbb, \ab)-\nu(\sbb)/\lambda\right]_+$ and $w(\sbb, \ab)=\left[(Q(\sbb, \ab)-\nu(\sbb))/\lambda\right]_+^2$, respectively. For variants that allow negative weighting, as discussed in \Cref{sec.geometric_interp}, negative weights must be carefully controlled to avoid destabilizing learning. To this end, we apply a simple truncation:
\begin{equation}\label{eq.negative_truncation}
w(\sbb, \ab) = \max\left(\frac{Q(\sbb, \ab)-\nu(\sbb)}{\lambda},\ell\right),
\end{equation}
where $\ell<0$ shapes the magnitude of negative weights. We denote this variant as \textbf{\algbb-Neg($\ell$)}. 

\textbf{Normalization constraints.}
It is not clear how to compute the normalization term $\nu(\sbb)$ for general reweighting functions. We propose a population-based solution by estimating the normalizer with a group of samples. For exponential weighting $g(\cdot)=\exp(\cdot)$, this translates to computing the softmax as the final weight; while for power function weighting, we can compute the normalizers by first determining the number of active samples (\ie, samples that are not clipped), and then compute the exact value such that the weights are normalized. Details can be found in Appendix~\ref{app:relu_derivations}.

\section{Experiments}
\label{sec.experiments}
In the experiments, we mainly answer the following questions:
\begin{enumerate}[nosep]
    \item[\qone] How do different weighting schemes affect the overall performance?
    \item[\qtwo] Do the negative weights help improve performance?
    \item[\qthree] How do we adapt reweighting functions according to the reward landscape?
\end{enumerate}

\subsection{Exploration-Exploitation Trade-off in Bandit}
We first study \qtwo~and \qthree~using a one-dimensional bandit problem, where we optimize a stateless policy $\pi(\cdot)$ to maximize an unknown reward function $R(\cdot)$. We evaluate performance using the simple regret metric, given by $\max_{a\in \mathcal{A}} R(a) - \mathbb E_{a\sim \pi}{[R(a)]}$.

\textbf{\qtwo~The effect of negative weights on exploration.} First, we select a reward function with two optima, shown in~\Cref{fig:bandit_negative} (Left), and initialize the policy to be aligned with the sub-optimum, so the policy is very likely to get stuck. We compare four types of reweighting methods: \textbf{Linear}, \textbf{Square}, \textbf{Exp}, and \textbf{Neg}. The first three are non-negative reweighting with different weighting functions in~\eqref{eq:weighted_matching}, while \textbf{Neg} uses negative reweighting in~\eqref{eq.negative_truncation} with default lower bound $\ell=-0.3$.

The regret curves are shown in~\Cref{fig:bandit_negative} (right), which show that, \textbf{Linear}, \textbf{Square}, \textbf{Exp} cannot achieve the global optimum for every trial. In particular, \textbf{Linear} reweighting always gets trapped in a local optimum. After adding negative reweighting, as shown in~\Cref{fig:bandit_negative} (middle), \textbf{Neg} can escape the sub-optimum, showing that negative reweighting can effectively help exploration and escape the local optimum. We also compare different negative lower bounds $\ell$ in~\Cref{fig:bandit_negative} (right). As $\ell$ decreases, the regret first decreases as the repelling effect kicks in, then regret increases again because large negative reweighting makes the algorithm unstable. The sweet spots are $-0.3$ or $-0.5$.  
%
\begin{figure}[h]
    \centering
    \vspace{-5pt}
    \includegraphics[height=0.17\linewidth]{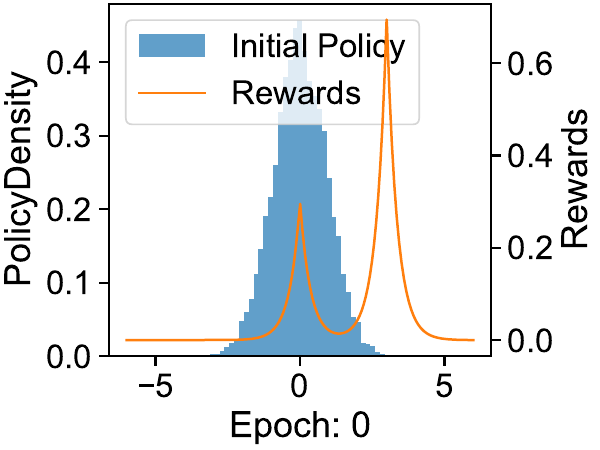}
    \includegraphics[height=0.17\linewidth]{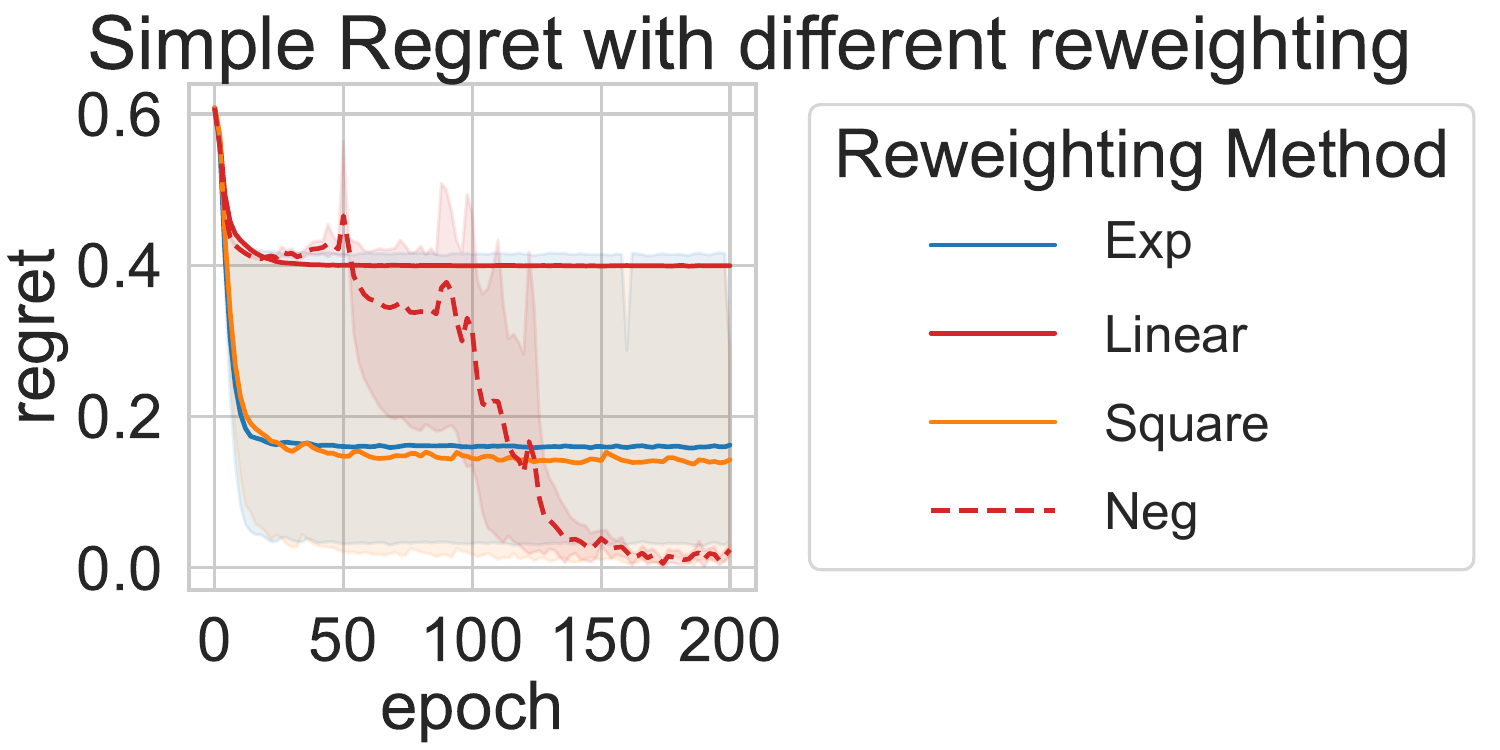}
    \includegraphics[height=0.17\linewidth]{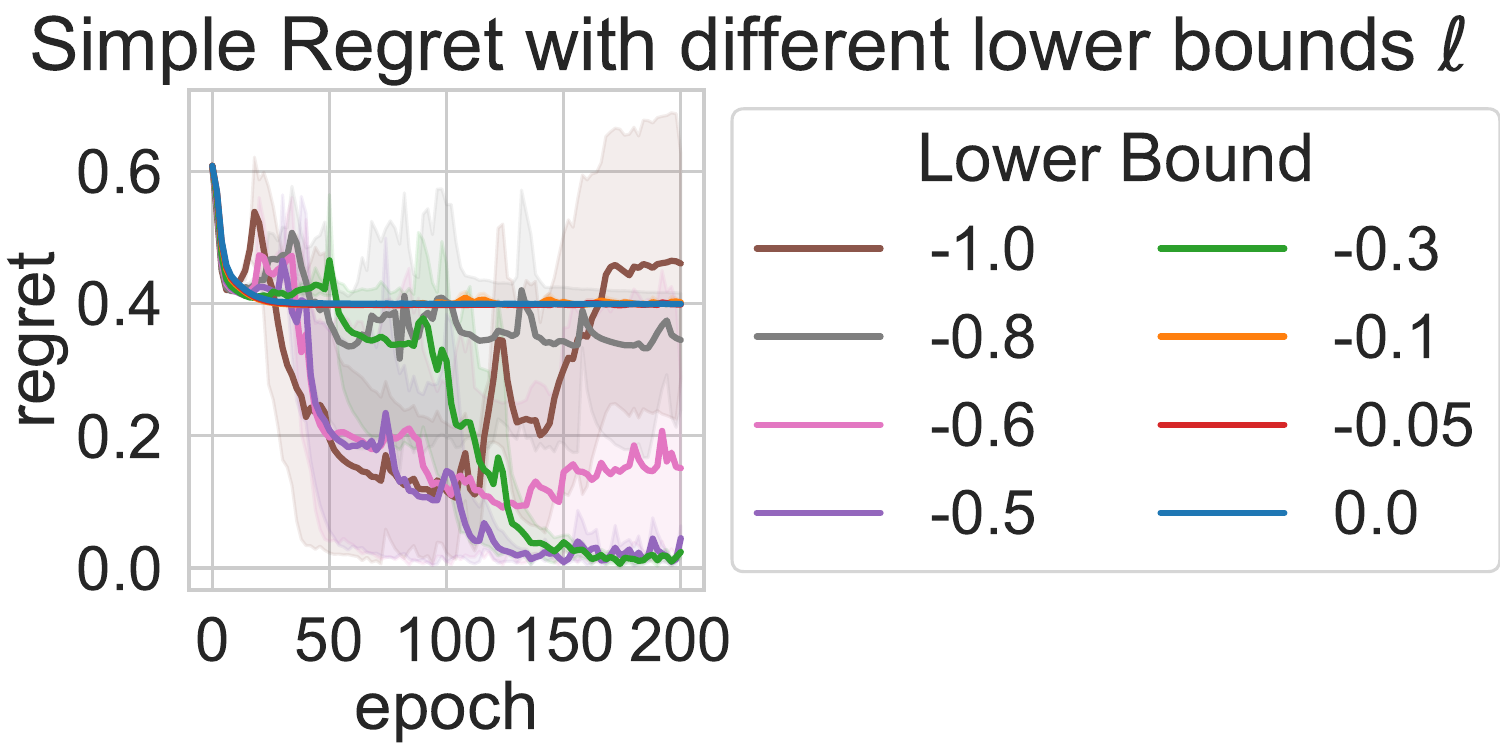}
    \vspace{-5pt}
    \caption{Bandit problem to answer \qtwo. \textbf{Left:} Reward functions and initial policy. \textbf{Middle:} Regret curves comparing different reweighting with and without negative reweighting functions over 200 epochs. \textbf{Right:} Regret curves with different lower bound $\ell$.} 
    \label{fig:bandit_negative}
\end{figure}
%
\begin{figure}[h]
    \centering
    \vspace{-3mm}    \includegraphics[height=0.18\linewidth]{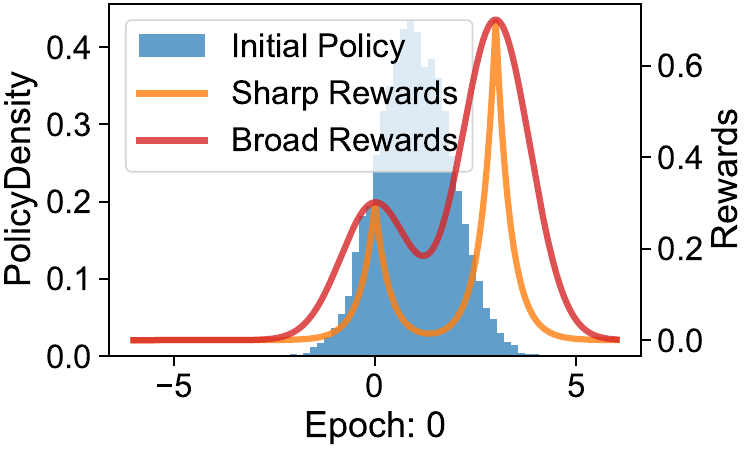}    \includegraphics[height=0.18\linewidth]{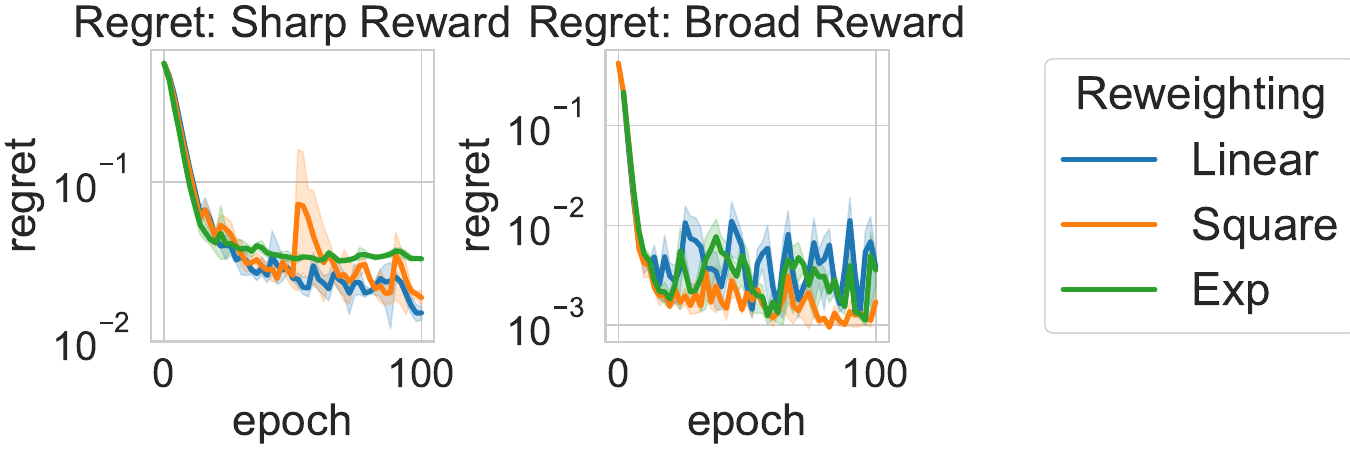}
    \vspace{-2mm}
    \caption{Bandit problem exploring \textcolor{purple}{\textbf{Q3}}. \textbf{Left:} Two reward functions and initial policy. \textbf{Right:} Regret curves comparing different reweighting schemes over 100 epochs. \textbf{Linear} excels in sharp rewards and \textbf{Square} excels at broad rewards, both outperform \textbf{Exp}.}
    \label{fig:bandit sample}
\end{figure}
\begin{table*}[ht]
\centering
\caption{Performance of 1M steps on Gym-MuJoCo environments. We report the average returns of the last 10 checkpoints $\pm$ standard error over 5 random seeds, and normalized average score normalized by SAC score. \textbf{Bold} numbers indicate best mean returns and \underline{underlined} numbers indicate mean returns within standard error of the best runs.}
\label{tab:performance}
\small
\setlength{\tabcolsep}{2pt}

\resizebox{\textwidth}{!}{
\begin{tabular}{l l S[table-format=5.0(4)] S[table-format=4.0(3)] S[table-format=4.0(4)] S[table-format=4.0(4)] S[table-format=4.0(4)] S[table-format=3.0(3)] S[table-format=3.1]}
\toprule
\multicolumn{2}{l}{\textbf{Method}} 
& {\textsc{HalfCheetah}} 
& {\textsc{Humanoid}} 
& {\textsc{Ant}} 
& {\textsc{Walker2d}} 
& {\textsc{Hopper}} 
& {\textsc{Swimmer}} 
& {\textbf{Norm. Avg.(\%)}} \\
\midrule
\multicolumn{2}{l}{\textbf{Classic Model-Free RL}} \\
& TD3  
& 9820(1543) 
& 5263(47)
& 4400(1196) 
& 3732(388) 
& \multicolumn{1}{c}{\boldcell{3387}{302}} 
& 88(25)
& 107.4 \\
& SAC  
& {\underlinecell{10938}{\ \ \ \ 68}} 
& 5131(97) 
& {\underlinecell{4945}{\ \ 186}} 
& 4463(151) 
& 2509(276) 
& 61(5)
& 100.0 \\
\midrule
\multicolumn{2}{l}{\textbf{Diffusion RL Baselines}} \\
& DIPO 
& 10136(2525) 
& 5184(132)
& 977(20)    
& 3809(1112)
& 1191(770)
& 46(3)
& 70.3 \\
& QSM 
& 10444(250) 
& 5103(73) 
& 3614(360) 
& 3635(621) 
& {\underlinecell{3200}{\ \ 255}} 
& 86(7)
& 103.0 \\
& DACER 
& 7669(1539) 
& {\boldcell{5332}{\ 64}} 
& {\boldcell{5141}{\ 356}} 
& 4252(176) 
& 2151(495) 
& 79(28)
& 98.1 \\
& QVPO
& 8655(209) 
& 5131(69) 
& 3852(67) 
& 3577(545) 
& {\underlinecell{3312}{\ \ \ \ 76}} 
& {\boldcell{110}{\ 5}} 
& 108.3 \\
\midrule
\multicolumn{2}{l}{\textbf{Ours}} \\
\rowcolor{sippo_std}
\rowcolor{sippo_std}
& \textbf{\algbb-Exp} 
& {\boldcell{10989}{\ 329}} 
& {\underlinecell{5293}{\ 55}} 
& 4719(58)
& {\boldcell{4695}{\ 197}} 
& {\underlinecell{3350}{\ \ \ \ 58}} 
& 99(13)
& 116.7 \\

\rowcolor{sippo_std}
& \textbf{\algbb-Square} 
& {\underlinecell{10679}{\ \ 500}} 
& 5179(49) 
& {\underlinecell{4942}{\ \ \ \ 75}} 
& {\underlinecell{4622}{\ \ \ \ 83}} 
& {\underlinecell{3325}{\ \ 113}} 
& 102(20)
& \textbf{117.0} \\

\rowcolor{sippo_std}
& \textbf{\algbb-Linear} 
& 9812(506) 
& 5265(74) 
& {\underlinecell{4819}{\ \ \ \ 73}} 
& {\underlinecell{4683}{\ \ \ \ 96}} 
& 3062(97) 
& 99(11)
& 113.2 \\
\bottomrule
\end{tabular}
}
\end{table*}

\begin{figure}[h]
    \centering
    \includegraphics[width=1.0\linewidth]{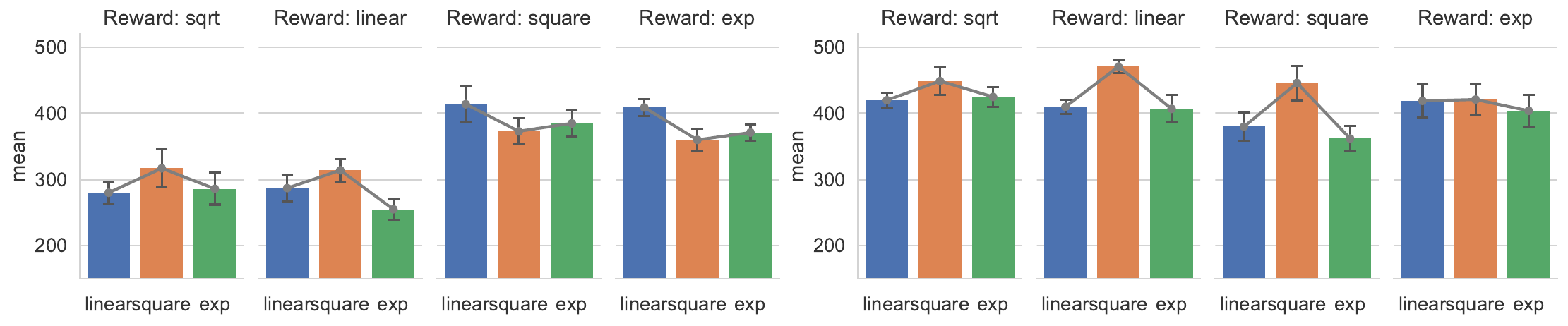}
    \vspace{-10pt}
    \caption{Performance with different reweighting and reward functions on the Cheetah Run (left) and Walker-Run (right) task. Boxes and error bars show mean returns and standard error over 5 seeds at the last checkpoint.}
    \label{fig:reward_landscape}
\end{figure}

\textbf{Reweighting and reward landscape.} Moreover, to answer \qthree, we compare two different reward functions, one with two broad optima and the other with two sharp optima, illustrated in \Cref{fig:bandit sample} (left). We show three non-negative reweighting functions: \textbf{Linear}, \textbf{Square}, and \textbf{Exp}. The resulting expected simple regret curves are presented in the \Cref{fig:bandit sample} (right). \textbf{Linear} excels in sharp rewards and \textbf{Square} excels at broad rewards, both outperform \textbf{Exp}. 

In summary, the choice of regularization term will affect the algorithm's performance, as it governs the exploration-exploitation trade-off relative to the curvature of the reward function.


\subsection{Locomotion Tasks}
To answer all questions on standard RL benchmarks, we implemented our algorithm with the JAX package and evaluated the performance on 6 OpenAI Gym MuJoCo v5 tasks. All environments are trained with a total of 1 million environment interactions. 
\textbf{Baselines} include two families of model-free RL algorithms. The first family is a collection of recent diffusion-policy online RL algorithms, including QSM~\cite{psenka2023learning}, QVPO~\cite{ding2024diffusion}, DACER~\cite{wang2024diffusion}, and DIPO~\cite{yang2023policy}. The second family is classic model-free online RL baselines including TD3~\cite{fujimoto2018addressing} and SAC~\cite{haarnoja2018soft}. A more detailed explanation to the baselines can be found in Appendix \ref{sec:apdx_baselines}.

\begin{wrapfigure}{r}{0.53\linewidth}
    \centering
    \vspace{-8pt}
\includegraphics[width=\linewidth]{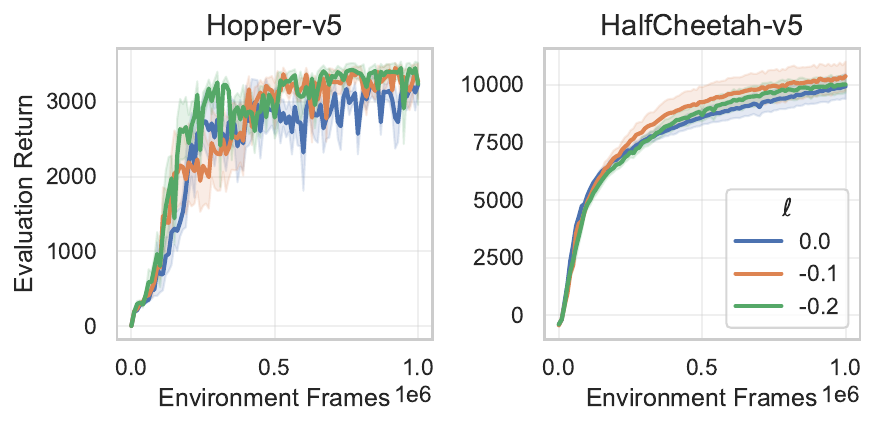}
\vspace{-15pt}
\caption{Training curves of \textbf{\algbb-Neg($\ell$)} with different truncation lower bound $\ell$. Setting $\ell=0$ corresponds to \textbf{\algbb-Linear}.}\label{fig.neg_ablation}
\end{wrapfigure}
\qone~The results are summarized in~\Cref{tab:performance}. All \textbf{\algbb} variants, \textbf{Linear}, \textbf{Square}, and \textbf{Exp}, exhibit consistently strong performance and outperform other diffusion-based RL baselines across most environments. Especially, \textbf{\algbb-Square} achieves the highest normalized average score, highlighting the benefit from flexible general reweighting function in our framework. 

\qtwo~Meanwhile, as shown in~\Cref{fig.neg_ablation}, we found that incorporating negative reweighting with $\ell=-0.1$ or $-0.2$ further enhances performance on top of the \text{Linear} variant, leading to marginal performance gains on the HalfCheetah-v5 and Hopper-v5 tasks.



\begin{wrapfigure}{r}{0.18\textwidth}
    \centering
    \includegraphics[width=\linewidth]{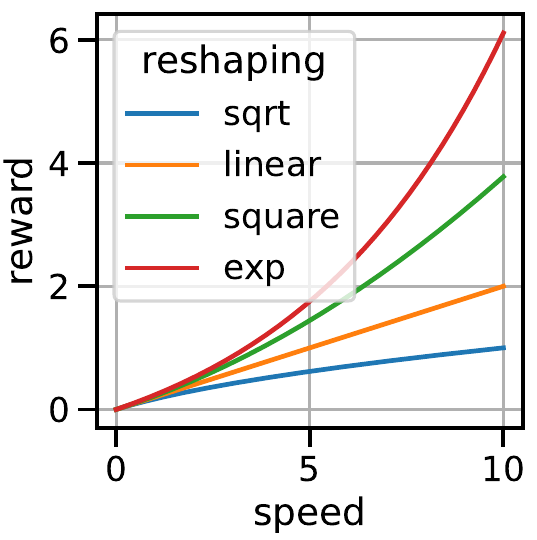}
    \vspace{-20pt}
    \caption{Reward functions. }
    \label{fig:reward_shaping}
\end{wrapfigure}
\qthree~To see the effect of reward functions under different landscapes in locomotion tasks, we handcrafted different reward functions to change the landscape in two DeepMind Control Suite tasks~\cite{tassa2018deepmind}, Cheetah-Run and Walker-Run, shown in~\Cref{fig:reward_shaping}. The task is to run as fast as possible, where rewards are monotonically increasing functions of robot speed. Four different rewards are ordered from flat to greedy as \texttt{sqrt}, \texttt{linear}, \texttt{square}, \texttt{exp}. The detailed definition of these reward functions is deferred to Appendix~\ref{sec.apdx.reward}.
The results are shown~\Cref{fig:reward_landscape}, which shows that \textbf{Square} reweighting excels at {flat} reward functions such as \texttt{sqrt} and \texttt{linear}, while \textbf{Linear} reweighting has better performance on {steep} reward functions, such as \texttt{exp}. The finding is consistent with our observations in the bandit problem. Therefore, we can choose \textbf{Linear} reweighting for steep reward landscapes and \textbf{Square} reweighting for relatively flat reward landscapes. Note that this is only qualitative intuition rather than saying one reweighting function must be better than another.

\vspace{-2pt}
\subsection{DNA Sequence Generation}
Finally, we evaluate our method on a real-world problem, learning to generate DNA sequences with a discrete diffusion model\footnote{Despite different definitions, discrete diffusion models share most of the properties with continuous diffusion, such as the optimal solution in~\Cref{eq:unweighted_closed_form_velocity}. Please refer to \citet{lou2023discrete,shi2024simplified} for details.}. The task is to fine-tune a DNA generation model, learned from a large public enhancer dataset containing roughly 700,000 DNA sequences~\citep{gosai2023machine}, to optimize the gene expression activity~\cite{avsec2021effective}. We leverage pretrained generation and reward models from~\cite{wang2024fine}. We tested multiple variants of our algorithm, including \textbf{Linear}, \textbf{Square}, and \textbf{Neg}. 

\textbf{Baselines.} We compare against controlled generation approaches including conditional guidance \citep[CG,][]{nisonoff2024unlocking}, SMC and ~\citep[TDS,][]{wu2023practical}, and classifier-free guidance \citep[CFG,][]{ho2022classifier}, as well as a strong RL-based baseline, DRAKES~\citep{wang2024fine}, which leveraged reparameterized policy gradient using Gumbel-softmax, and RL-D$^2$~\cite{ma2025reinforcement}, which uses weighted regression with KL divergence regularization (the same as \textbf{\algbb-Exp}).
\begin{table}[t]
\centering
\caption{\textbf{Main Results on Pred-Activity.} Comparison of \algbb variants against baselines. Standard \algbb variants are highlighted in orange, while variants with negative sample awareness are highlighted in blue and named as Neg($\ell$). The inclusion of negative samples consistently provides the highest gains up to $14.3\%$.}
\label{tab:simpo_colored}
\small
\setlength{\tabcolsep}{4pt}

\begin{tabular}{@{}c@{\hspace{8pt}}|@{\hspace{8pt}}c@{}}

\begin{tabular}[c]{@{}l S[table-format=1.2(2)] r@{}}
\toprule
\multicolumn{3}{c}{\textbf{Baselines}} \\
\cmidrule(lr){1-3}
\textbf{Method} & {\textbf{Pred.} $\uparrow$} & \textbf{Improv.} \\
\midrule
Pretrained & 0.17(04) & -- \\
CG & 3.30(01) & -- \\
TDS & 4.64(21) & -- \\
CFG & 5.04(06) & -- \\
DRAKES & 6.44(04) & -1.2\% \\
DRAKES-KL & 5.61(07) & -13.9\% \\
WD1 & 3.78(12) & -42.0\% \\
\textbf{RL-D$^2$ (best baseline)}
& \bfseries 6.52(03)
& -- \\
\bottomrule
\end{tabular}

&

\begin{tabular}[c]{@{}l S[table-format=1.2(2)] r@{}}
\toprule
\multicolumn{3}{c}{\textbf{\algbb Variants}} \\
\cmidrule(lr){1-3}
\textbf{Method} & {\textbf{Pred.} $\uparrow$} & \textbf{Improv.} \\
\midrule
\cellcolor{sippo_std}{Linear}
& \cellcolor{sippo_std} 6.54(01)
& \cellcolor{sippo_std} +0.3\% \\

\cellcolor{sippo_std}\textbf{Square}
& \cellcolor{sippo_std}\bfseries 6.77(03)
& \cellcolor{sippo_std}\textbf{+3.8\%} \\

\cellcolor{sippo_neg}{Neg(-0.05)}
& \cellcolor{sippo_neg} 6.75(05)
& \cellcolor{sippo_neg}{+3.5\%} \\

\cellcolor{sippo_neg}{Neg(-0.1)}
& \cellcolor{sippo_neg} 6.95(07)
& \cellcolor{sippo_neg}{+6.6\%} \\

\cellcolor{sippo_neg}\textbf{Neg(-0.3)}
& \cellcolor{sippo_neg}\bfseries 7.45(03)
& \cellcolor{sippo_neg}\textbf{+14.3\%} \\

\cellcolor{sippo_neg}{Neg(-0.5)}
& \cellcolor{sippo_neg} 5.62(113)
& \cellcolor{sippo_neg}{-13.8\%} \\
\bottomrule
\end{tabular}

\end{tabular}
\vspace{-10pt}
\end{table}

\textbf{Results.} Our experimental results, summarized in \Cref{tab:simpo_colored}, demonstrate that \algbb significantly outperforms all considered baselines in terms of the gene expression levels. While the best-performing baseline, RL-D$^2$, achieves a median score of $6.52$, our standard linear and square \algbb variants already provide competitive or superior performance. However, the most substantial gains are observed when incorporating negative sample awareness. Specifically, \textbf{\algbb-Neg(-0.3)} variant achieves the highest overall performance of $7.45 \pm 0.03$, representing a $+14.3\%$ improvement over the best baseline. Moreover, the \textbf{\algbb-Neg} variants show that the performance first increases and then decreases as $\ell$ decreases from $0$ to $-0.5$.

\vspace{-2pt}
\subsection{Sensitivity Analysis}

\begin{figure}[h]
    \centering
    \vspace{-10pt}\includegraphics[width=0.48\linewidth]{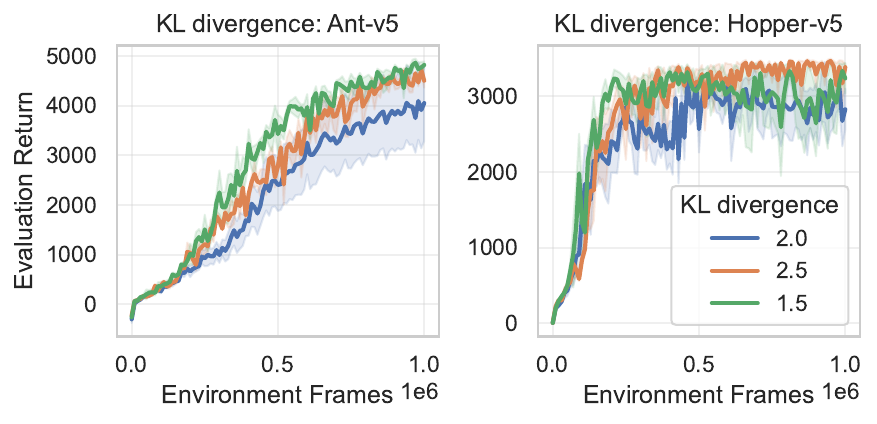}
    \includegraphics[width=0.48\linewidth]{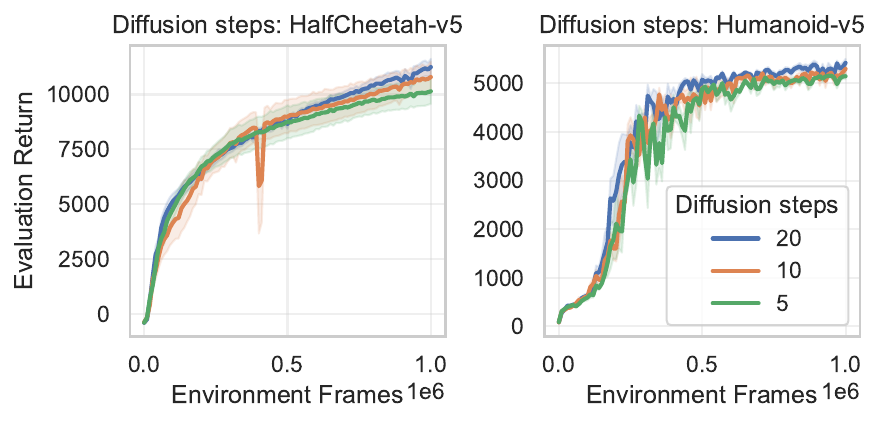}
    \caption{Ablation studies on KL tolerance and diffusion steps.}
    \vspace{-6pt}
    \label{fig:kl_ablation}
\end{figure}

\textbf{Ablation on the KL tolerance.} \algbb employs automatic temperature tuning by constraining the KL divergence between the weighting distribution $w(\cdot)$ and the uniform distribution (Appendix~\ref{apdx.temp_tuning}). A higher KL tolerance permits more aggressive temperature adjustments, leading to a greedier reweighting. Our evaluation (\Cref{fig:kl_ablation}) indicates that the optimal tolerance is task-dependent, necessitating hyperparameter tuning across different domains.

\textbf{Ablation on diffusion steps.} Results in \Cref{fig:kl_ablation} indicate that performance is largely insensitive to the number of diffusion steps. While we observe a marginal improvement in accuracy as the step count increases, the overall gains remain minimal, suggesting that \algbb is robust to this hyperparameter.




\vspace{-1.5mm}
\section{Conclusions}
\vspace{-1.5mm}
\label{sec.conclusion}
In this work, we established a cohesive framework which uses Generalized Measure matching for diffusion Policy Optimization~(\algbb). By decoupling algorithm design into target measure construction and measure matching, we clarify the broader design space of reweighted flow matching. We demonstrate that by adopting different $f$-divergences, our framework recovers several existing algorithms, such as QVPO and DPMD, as its special cases. Furthermore, by extending the target to signed measure families, our framework provides a justification for negative weights, thereby resolving a critical exploration dilemma inherent in previous reweighted flow matching methods. One limitation of this negative reweighting is that it might cause an unbounded loss function, while our results showed that proper $\ell$ could avoid instability. We hope this paper will inspire more efficient and effective post-training algorithms for diffusion models and flow models.



\newpage
\bibliography{ref,sdac}
\bibliographystyle{plainnat}


\newpage
\appendix

\section{Related works}
\label{sec.related_works}
\textbf{RL for diffusion and flow models.} RL for diffusion or flow models can be classified into two categories: \textbf{a)} Only working with the reverse process, either by computing \emph{policy gradient} via accumulating the log probabilities over the diffusion timesteps~\cite{black2023training,fan2023dpok,ren2024diffusion,liu2025flow,zhang2025reinflow}, or directly backpropagating gradients through the reverse process by \emph{reparameterization}~\cite{clark2023directly,wang2024diffusion,prabhudesai2024video,celik2025dime,ding2025genpo,zhang2025sac,gaobehavior}. Such methods closely resemble standard RL algorithms, such as Proximal Policy Optimization (PPO)~\cite{schulman2017proximal} and Soft Actor Critic (SAC)~\cite{haarnoja2018soft}, respectively. However, their computational cost typically scales linearly with the number of reverse diffusion steps \citep{wang2022diffusion,gaobehavior}. Moreover, they require a training pipeline that is substantially different from standard score or flow matching, as we need to generate and differentiate through the denoising trajectory. \textbf{b)} Another way is preserving the forward process, and steering the score or flow prediction through \emph{weighted behavior cloning} \citep{ding2024diffusion,ma2025efficient,zhang2025energy,tang2025wd1,dong2025maximum,li2026reverse} or \emph{moving towards calibrated targets} \citep{lu2023contrastive,psenka2023learning,frans2025diffusion,akhound2024iterated}. In this way, the optimization is well-formulated as distribution matching. In this paper, we dive into the weighted flow matching formula and explore more reweighting options.


\textbf{Regularizations in policy extraction.} KL divergence is widely adopted to regularize the policy improvement~\cite{peng2019advantage, vieillard2020leverage,nair2020awac}. However, it is also possible to adopt other types of divergence metrics. Several works also consider reverse KL regularization~\cite{xu2025uni,ma2025reinforcement}, leading to update rules similar to policy gradient theorems. \citet{xu2023offline,xu2025uni} further study implicit value regularization, which derives closed-form target policies and update rules for general $f$-divergence regularized problems. In this paper, we deepen this understanding by extending it to both diffusion policy and signed target measures, thus allowing more flexible and effective objective design. Finally, another branch of research about policy mirror descent is motivated by the Bregman divergence~\cite{tomar2021mirrordescentpolicyoptimization,shani2020adaptive}. Notably, the KL divergence occupies a special position, as it is both a Bregman divergence and an $f$-divergence.

\textbf{Leveraging negative samples in RL for diffusion models.} Negative weights are typically underutilized in weighted behavior cloning and best-of-N distillation. To better exploit such samples in diffusion models, \citet{frans2025diffusion,intelligence2025pi,zheng2025diffusionnft} used reward or advantage functions to classify data into preference pairs, and reinforced the diffusion model with classifier-free guidance. \citet{tang2025wd1} leveraged a weighted behavior cloning with negative weights for diffusion language model reasoning. However, the negative reweighting function is an ad-hoc design, while we provide a generalized and unified viewpoint with theoretical justification.
\section{Extension of $f$-divergence to Signed Measures.}
\label{sec.f_div_signed_measure}
Let \(Q\) be a finite nonnegative measure (e.g., a probability measure) and let \(P\) be a finite signed measure. Assume the total variation of \(P\) is absolutely continuous with respect to \(Q\), i.e., \(|P|\ll Q\), so that the Radon--Nikodym derivative \(r := dP/dQ\) exists \(Q\)-a.e.\ and may take negative values. If \(f\) is extended to a measurable function on \(\mathbb{R}\) (or at least on the essential range of \(r\)) such that \(f(r)\in L^{1}(Q)\), we define
\[
D_f(P\|Q)\;:=\;\int f\!\left(\frac{dP}{dQ}\right)\,dQ.
\]
Moreover, if \(Q\) is a probability measure, \(P(\Omega)=Q(\Omega)=1\), and the extension of \(f\) is convex on \(\mathbb{R}\) with \(f(1)=0\), then Jensen's inequality yields \(D_f(P\|Q)\ge 0\) despite \(r\) potentially being negative; under strict convexity, the usual identity-of-indiscernibles conclusion \(D_f(P\|Q)=0\Rightarrow P=Q\) follows (up to standard regularity/equality conditions).

\section{Derivations of the Regularized Policy Optimization}

\subsection{Derivation of the $f$-Divergence Regularized Policy Optimization}\label{sec.apdx.derivation}
\label{sec.apdx.derivation.optimal_solution}

We omit the state $\sbb$ in the following derivation for brevity. Let $f: \mathbb{R}^+\to\mathbb{R}$ be strictly convex and differentiable and $f(1)=0$. We want to find a policy $\pi\in\Pi$ that maximizes the value $Q(\ab)$ while controlling the $f$-divergence from the behavior policy $\pi_{\rm old}$, which can be formulated as the following constrained optimization problem:
\begin{equation}\label{problem:f-div}
\begin{aligned}
&\quad\max_{\pi\in\Pi}\  \sum_{\ab\in\Acal} \pi(\ab) Q(\ab) \\
&\text{s.t.}\left\{\begin{aligned}
    D_{f}(\pi(\cdot)\|\pi_{\rm old}(\cdot))&\leq \epsilon \\
    \sum_{\ab\in\Acal} \pi(\ab)&=1\\
    \forall \ab\in\Acal, \ \ \pi(\ab)&\geq 0
\end{aligned}\right.
\end{aligned}
\end{equation}

We introduce Lagrange multipliers $\lambda$, $\nu$, and $\eta(\ab)\geq 0$ for the constraints, respectively, leading to the following Lagrange function $\Lcal$,
\begin{equation}\label{eq:f-div_lag}
\begin{aligned}
    \Lcal(\pi, \lambda, \nu, \eta) &= \sum_{\ab\in\Acal} \pi(\ab) Q(\ab) - \lambda \left(\sum_{\ab\in\Acal}\pi_{\rm old}(\ab)f\rbr{\frac{\pi(\ab)}{\pi_{\rm old}(\ab)}}-\epsilon\right) - \nu(1-\sum_{\ab\in\Acal}\pi(\ab))+\sum_{\ab\in\Acal}\eta(\ab)\pi(\ab).
\end{aligned}
\end{equation}
We take the functional derivative with respect to $\pi(\ab)$ and set it to zero. Recall that the derivative of the term $\mu f(\pi / \mu)$ with respect to $\pi$ is $f^{\prime}(\pi / \mu)$, 
\begin{equation}
\begin{aligned}
    0=\frac{\partial \Lcal}{\partial \pi(\ab)} =Q(\ab)-\lambda f'(\frac{\pi(\ab)}{\pi_{\rm old}(\ab)})-\nu+\eta(\ab),
\end{aligned}
\end{equation}
Rearranging to isolate the divergence term,
\begin{equation}
f'\left(\frac{\pi(\ab)}{\pi_{\rm old}(\ab)}\right)=\frac{Q(\ab)-\nu+\eta(\ab)}{\lambda}.
\end{equation}

We define $g_f=(f')^{-1}$. Since the complimentary slackness requires $\eta(\ab)\pi(\ab)=0$, we consider the following cases:
\begin{itemize}
    \item $\pi(\ab)>0$, in this case $\eta(\ab)=0$, therefore we have 
    $$
    \begin{aligned}
    f'\left(\frac{\pi(\ab)}{\pi_{\rm old}(\ab)}\right)=\frac{Q(\ab)-\nu}{\lambda}&\Rightarrow \frac{\pi(\ab)}{\pi_{\rm old}(\ab)}=g_f\left(\frac{Q(\ab)-\nu}{\lambda}\right)\\
    \end{aligned}
    $$
    Besides, since $f$ is strongly convex, both $f'$ and $g_f$ are increasing, therefore 
    $$
    \frac{Q(\ab)-\nu}{\lambda}=f'(\frac{\pi(\ab)}{\pi_{\rm old}(\ab)})>f'(0).
    $$
    \item $\pi(\ab)=0$, in this case $\eta(\ab)>0$. From the stationary condition, we know that 
    $$
    Q(\ab)-\nu +\eta(\ab)=\lambda f'(0) \Rightarrow \frac{Q(\ab)-\nu}{\lambda}< f'(0).
    $$
\end{itemize}
Therefore, for both cases, we can unify the optimal solution as
\begin{equation}\label{eq:f-div_optimal_sol}
\begin{aligned}
    \pi(\ab)&= \pi_{\rm old}(\ab)\bar{g}_f\left(\frac{Q(\ab)-\nu}{\lambda}\right),
\end{aligned}
\end{equation}
where 
\begin{equation}
    \bar{g}_f(x)=\left\{
    \begin{aligned}
        &g_f(x)\quad &\text{if }x>f'(0)\\
        &0 & \text{if }x\leq f'(0)
    \end{aligned}
    \right..
\end{equation} 

\subsection{Derivation of the $f$-Divergence Regularized Policy Optimization on Signed Measures}
\label{sec.apdx.derivation.optimal_solution_signed}

Let $f(x): \mathbb{R}\to\mathbb{R}$ be strictly convex and differentiable and $f(1)=0$. Since $f$ is strictly convex, $f'$ is monotonically increasing, and therefore $g_f=(f')^{-1}$ is well defined. Consider the solution space $\widetilde{\Pi}=\{\pi:\sum_{\ab\sim\Acal}\pi(\ab)=1\}$, the optimization problem regularized the generalized $f$-divergence is defined as
\begin{equation}\label{problem:slack}
\begin{aligned}
&\quad\max_{\pi\in\widetilde{\Pi}}\  \sum_{\ab\in\Acal} \pi(\ab) Q(\ab) \\
&\text{s.t.}\left\{\begin{aligned}
    D_{{f}}(\pi(\cdot)\|\pi_{\rm old}(\cdot))&\leq \epsilon \\
    \sum_{\ab\in\Acal} \pi(\ab)&=1\\
\end{aligned}\right.
\end{aligned}
\end{equation}

We introduce Lagrange multipliers $\lambda$ and $\nu$ for the constraints, leading to the following Lagrange function $\Lcal$:
\begin{equation}
\begin{aligned}
    \Lcal(\pi, \lambda, \nu) &= \sum_{\ab\in\Acal} \pi(\ab) Q(\ab) - \lambda \left(\sum_{\ab\in\Acal}\pi_{\rm old}(\ab)f\rbr{\frac{\pi(\ab)}{\pi_{\rm old}(\ab)}}-\epsilon\right) - \nu(1-\sum_{\ab\in\Acal}\pi(\ab)).
\end{aligned}
\end{equation}
Taking its derivative w.r.t. $\pi(\ab)$ and setting it to zero:
\begin{equation}
\begin{aligned}
    0=\frac{\partial \Lcal}{\partial \pi(\ab)} =Q(\ab)-\lambda f'\rbr{\frac{\pi(\ab)}{\pi_{\rm old}(\ab)}}-\nu,
\end{aligned}
\end{equation}
i.e., 
$$
\begin{aligned}
    \pi(\ab)=\pi_{\rm old}(\ab)(f')^{-1}\left(\frac{Q(\ab)-\nu}{\lambda}\right)=\pi_{\rm old}(\ab)g_f\left(\frac{Q(\ab)-\nu}{\lambda}\right).
\end{aligned}
$$
Note that $\nu$ is the normalization term such that $\sum_{\ab\in\Acal}\mu(\ab)g\left((Q(\ab)-\nu)/\lambda\right)=1$.

Generally speaking, for any strictly increasing (and typically continuous) function $g$ defined on an interval $I\subseteq \mathbb{R}$, its inverse $g^{-1}$ exists on $g(I)$ and is strictly increasing. Define $f'_g(x)=g^{-1}(x)$ for $x\in g(I)$ and 
$$
f_g(x)=\int_{\ab_0}^xg^{-1}(t)\mathrm{d}t + C
$$
yields a differentiable strictly convex function on $g(I)$. Normalizing $f_g(1)=0$ underpins $C$. While the analytic formulation of $f_g$ may not be available for an arbitrary $g$, this observation allows us to reason directly in terms of the weighting function $g$ rather than $f_g$. 

\subsection{Policy Improvement Property of the Target Measure}\label{sec.apdx.policy_improvement}
\begin{theorem}[Theorem \ref{thm:policy_improvement} restated] Assume $g: \mathbb{R}\to\mathbb{R}$ is monotonic increasing and the target measure $\tilde{\pi}$ is defined as
$$
\tilde{\pi}(\ab|\sbb)=\pi(\ab|\sbb)g\left(\frac{Q(\sbb, \ab)-\nu}{\lambda}\right)\quad \forall \sbb\in\Scal,
$$
where $\lambda>0$ and $\nu$ is selected such that $\sum_{\ab\in\Acal}\tilde{\pi}(\ab|\sbb)=1$ holds. Then we have $$
\sum_{\ab\in\Acal} \tilde{\pi}(\ab|\sbb)Q(\sbb, \ab) \geq \sum_{\ab\in\Acal} \pi(\ab|\sbb)Q(\sbb, \ab)\quad \forall \sbb\in\Scal.
$$
\end{theorem}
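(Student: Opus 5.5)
Fix a state $\sbb$ and drop it from the notation. Write $w(\ab)\doteq g\big((Q(\ab)-\nu)/\lambda\big)$. The normalization $\sum_{\ab}\tilde\pi(\ab)=1$ then reads $\EE_{\pi}[w]=1$. The claim is therefore
\[
\EE_{\pi}[w\,Q]-\EE_{\pi}[w]\,\EE_{\pi}[Q]=\mathrm{Cov}_{\pi}(w,Q)\geq 0 .
\]
So the plan is to reduce the statement to nonnegativity of a covariance between two functions that are co-monotone in $\ab$. This is a Chebyshev-type association inequality, and it does not use the sign of $w$ anywhere, which is exactly why signed measures are allowed.

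\textbf{Step 1: centre the difference.} Using $\EE_\pi[w]=1$, the difference of expected values equals
\[
\sum_{\ab}\tilde\pi(\ab)Q(\ab)-\sum_{\ab}\pi(\ab)Q(\ab)=\sum_{\ab}\pi(\ab)\big(w(\ab)-1\big)Q(\ab)=\sum_{\ab}\pi(\ab)\big(w(\ab)-1\big)\big(Q(\ab)-c\big)
\]
for any constant $c$, because $\sum_{\ab}\pi(\ab)(w(\ab)-1)=0$.

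\textbf{Step 2: choose $c$.} Since $g$ is increasing and $\lambda>0$, the map $q\mapsto g((q-\nu)/\lambda)-1$ is nondecreasing in $q$. Let $c$ be a threshold with $w(\ab)\geq 1$ whenever $Q(\ab)>c$ and $w(\ab)\leq 1$ whenever $Q(\ab)<c$. One concrete choice is $c=\sup\{q: g((q-\nu)/\lambda)\leq 1\}$; any point separating the two regions works. With this $c$, every summand $\pi(\ab)(w(\ab)-1)(Q(\ab)-c)$ is a product of two factors of the same sign, times $\pi(\ab)\ge0$. Hence each summand is nonnegative and so is the sum.

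\textbf{Alternative route.} One can instead write the symmetrized identity
\[
\mathrm{Cov}_\pi(w,Q)=\tfrac12\sum_{\ab,\ab'}\pi(\ab)\pi(\ab')\big(w(\ab)-w(\ab')\big)\big(Q(\ab)-Q(\ab')\big).
\]
Each term is nonnegative because $w$ is a nondecreasing function of $Q$, so this gives the same conclusion without choosing a threshold.

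\textbf{Main obstacle.} The only delicate step is picking the threshold $c$ correctly when $g$ may jump across the level $1$, or is constant near it; the symmetrized identity removes this issue entirely. Strict monotonicity, as in Theorem~\ref{thm:policy_improvement}, additionally gives strict improvement unless $Q$ is $\pi$-a.s.\ constant, but that is not needed for the weak inequality. The continuous-action case follows identically with integrals, assuming $wQ\in L^1(\pi)$.
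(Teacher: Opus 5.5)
Your proof is correct and takes essentially the same approach as the paper. The paper rescales to $A=(Q-\nu)/\lambda$ and writes the improvement as $\mathrm{Cov}_\pi[A,g(A)]$, which is your $\mathrm{Cov}_\pi(w,Q)$ divided by $\lambda>0$, and then simply asserts that this covariance is nonnegative because $g$ is increasing; your symmetrized (Chebyshev) identity, or the threshold argument, supplies the justification the paper omits and makes explicit that $w\ge 0$ is never used.
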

\begin{proof}
    Denote $A(\sbb, \ab)=\frac{Q(\sbb, \ab)-\nu}{\lambda}$, and thus we have $\sum_{\ab\in\Acal}\tilde{\pi}(\ab|\sbb)=\mathbb{E}_{\pi}[g\left(A(\sbb, \ab)\right]=1$. It suffices to show 
    $$
    \sum_{\ab\in\Acal}\tilde{\pi}(\ab|\sbb)A(\sbb, \ab) \geq \sum_{\ab\in\Acal}\pi(\ab|\sbb)A(\sbb, \ab).
    $$
    Consider the difference
    $$
    \begin{aligned}
        &\sum_{\ab\in\Acal}\tilde{\pi}(\ab|\sbb)A(\sbb, \ab) - \sum_{\ab\in\Acal}\pi(\ab|\sbb)A(\sbb, \ab)\\
        &=\sum_{\ab\in\Acal}\pi(\ab|\sbb)A(\sbb, \ab)g\left(A(\sbb, \ab)\right) - \left(\sum_{\ab\in\Acal}\pi(\ab|\sbb)A(\sbb, \ab)\right)\left(\sum_{\ab\in\Acal}\pi(\ab|\sbb)g\left(A(\sbb, \ab)\right)\right)\\
        &=\EE_\pi[A(\sbb, \ab)g\left(A(\sbb, \ab)\right)]-\EE_\pi[A(\sbb, \ab)]\EE_\pi[g\left(A(\sbb, \ab)\right)]\\
        &=\text{Cov}_{\pi}[A(\sbb, \ab), g\left(A(\sbb, \ab)\right)].
    \end{aligned}
    $$
    Since $g$ is monotonic increasing, we know that $\text{Cov}_{\pi}[A(\sbb, \ab), g\left(A(\sbb, \ab)\right)]\geq 0$, and thus
    $$
    \sum_{\ab\in\Acal}\tilde{\pi}(\ab|\sbb)A(\sbb, \ab) \geq \sum_{\ab\in\Acal}\pi(\ab|\sbb)A(\sbb, \ab).
    $$
\end{proof}

\section{Derivation of Closed-Form Solution of Reweighted Conditional Flow Matching.}
\label{apdx.closed_form}
We temporarily omit conditioning on the state $\sbb$ for clarity.

\subsection{Closed-Form Solution of Conditional Flow Matching}
We first establish the optimal solution for standard conditional flow matching without a reweighting factor $w(\boldsymbol{x}_0)$.\begin{lemma}[Conditional flow matching fits marginal velocity~\cite{lipman2022flow}]The conditional flow matching (CFM) loss is defined as:
$$
\mathcal{L}_{\rm CFM}(\theta)=\mathbb{E}_{t \sim \mathcal{U}[0,1], \ \boldsymbol{x}_0 \sim p_{\text{data}}(\boldsymbol{x}_0), \ \boldsymbol{x}_t \sim q_{t|0}(\cdot|\boldsymbol{x}_0)} \left[ \| D_{\theta}(\boldsymbol{x}_t, t) - \boldsymbol{v}_{t|0}(\boldsymbol{x}_t|\boldsymbol{x}_0) \|^2 \right].
$$
Upon convergence, the $D_{\theta^*}(\boldsymbol{x_t}, t)$ fits the marginal velocity $\boldsymbol{v}_t(\boldsymbol{x}_t)$. 
\end{lemma}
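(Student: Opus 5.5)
The plan is to establish the standard fact that conditional flow matching recovers the marginal velocity. I would do this by decomposing the loss into a $\theta$-independent term plus a term minimized pointwise, conditioning on $(\boldsymbol{x}_t,t)$. Write $p_t(\boldsymbol{x}_t)=\int q_{t|0}(\boldsymbol{x}_t|\boldsymbol{x}_0)p_{\rm data}(\boldsymbol{x}_0)\,\mathrm{d}\boldsymbol{x}_0$ for the marginal, and $p_{0|t}$ for the posterior defined in \Cref{sec.preliminaries}. By Bayes' rule the joint law of $(\boldsymbol{x}_0,\boldsymbol{x}_t)$ factorizes as $p_t(\boldsymbol{x}_t)\,p_{0|t}(\boldsymbol{x}_0|\boldsymbol{x}_t)$. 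This lets me rewrite
\[
\mathcal{L}_{\rm CFM}(\theta)=\mathbb{E}_{t,\ \boldsymbol{x}_t\sim p_t}\Big[\mathbb{E}_{\boldsymbol{x}_0\sim p_{0|t}(\cdot|\boldsymbol{x}_t)}\big\|D_\theta(\boldsymbol{x}_t,t)-\boldsymbol{v}_{t|0}(\boldsymbol{x}_t|\boldsymbol{x}_0)\big\|^2\Big].
\]

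Next I would expand the inner square around the posterior mean. By \eqref{eq:unweighted_closed_form_velocity}, that mean is exactly $\boldsymbol{v}_t(\boldsymbol{x}_t)$. Adding and subtracting it gives
\[
\|D_\theta-\boldsymbol{v}_t\|^2+2\langle D_\theta-\boldsymbol{v}_t,\ \boldsymbol{v}_t-\boldsymbol{v}_{t|0}\rangle+\|\boldsymbol{v}_t-\boldsymbol{v}_{t|0}\|^2 .
\]
Given $\boldsymbol{x}_t$, the first factor of the cross term is deterministic, and the second factor has zero posterior mean, so the cross term vanishes in expectation. The last term is the posterior variance of the conditional velocity, which does not depend on $\theta$. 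Hence
\[
\mathcal{L}_{\rm CFM}(\theta)=\mathbb{E}_{t,\boldsymbol{x}_t}\|D_\theta(\boldsymbol{x}_t,t)-\boldsymbol{v}_t(\boldsymbol{x}_t)\|^2+C .
\]
The two losses therefore share gradients and minimizers.

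To conclude, I would argue as follows. Assuming the network class is expressive enough, so that the infimum is attained by pointwise minimization, the minimizer satisfies $D_{\theta^*}(\boldsymbol{x}_t,t)=\boldsymbol{v}_t(\boldsymbol{x}_t)$ for $p_t\otimes\mathcal{U}[0,1]$-almost every $(\boldsymbol{x}_t,t)$. The uniqueness here comes from the strict convexity of the squared norm.

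The main obstacle is technical rather than conceptual. I need integrability of $\|\boldsymbol{v}_{t|0}\|^2$ so that the expansion and Fubini are justified, and I need to interpret "upon convergence" as reaching the global minimizer within a sufficiently rich function class. I would state both as standing assumptions, namely finite second moments of $p_{\rm data}$ and a smooth schedule with $\sigma_t>0$ for $t\in(0,1]$. The same computation, with the weight $w(\boldsymbol{x}_0)$ inserted into the posterior expectation, would then yield \eqref{eq.closed_form_solution_reweighting}.
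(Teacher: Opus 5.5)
Your proof is correct and follows the paper's route. Both arguments use the Bayes factorization $p_{\rm data}(\boldsymbol{x}_0)q_{t|0}(\boldsymbol{x}_t|\boldsymbol{x}_0)=q_t(\boldsymbol{x}_t)q_{0|t}(\boldsymbol{x}_0|\boldsymbol{x}_t)$ and then minimize the inner posterior expectation pointwise in $(\boldsymbol{x}_t,t)$. The only difference is that the paper sets the gradient of the pointwise quadratic to zero, whereas you complete the square around the posterior mean; your version also yields $\mathcal{L}_{\rm CFM}(\theta)=\mathbb{E}\|D_\theta-\boldsymbol{v}_t\|^2+C$ for every $\theta$, so the gradients coincide (as the preliminaries assert) and uniqueness of the minimizer is immediate.
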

\begin{proof}
Expanding the expectation into integral form and applying Bayes' theorem to reverse the joint distribution, $p_{\text{data}}(\boldsymbol{x}_0) q_{t|0}(\boldsymbol{x}_t|\boldsymbol{x}_0) = q_t(\boldsymbol{x}_t) q_{0|t}(\boldsymbol{x}_0|\boldsymbol{x}_t)$, we can rewrite the loss as:$$\mathcal{L}_{\rm CFM}(\theta)=\int_0^1 \int q_t(\boldsymbol{x}_t) \left[ \int q_{0|t}(\boldsymbol{x}_0|\boldsymbol{x}_t) \| D_\theta(\boldsymbol{x}_t, t) - \boldsymbol{v}_{t|0}(\boldsymbol{x}_t|\boldsymbol{x}_0) \|^2 \, d\boldsymbol{x}_0 \right] \, d\boldsymbol{x}_t \, dt$$Because $q_t(\boldsymbol{x}_t)$ and $dt$ are non-negative, minimizing the global integral is equivalent to minimizing the inner objective pointwise for any fixed $\boldsymbol{x}_t$ and $t$. Let $v=D_\theta(\boldsymbol{x}_t, t)$. The local objective becomes:$$J(v)=\int q_{0|t}(\boldsymbol{x}_0|\boldsymbol{x}_t) \| v - \boldsymbol{v}_{t|0}(\boldsymbol{x}_t|\boldsymbol{x}_0) \|^2 \, d\boldsymbol{x}_0$$Setting the gradient with respect to $v$ to zero yields:$$\nabla_v J(v)=2 \int q_{0|t}(\boldsymbol{x}_0|\boldsymbol{x}_t) (v - \boldsymbol{v}_{t|0}(\boldsymbol{x}_t|\boldsymbol{x}_0)) \, d\boldsymbol{x}_0 = 0$$Since $v$ is independent of $\boldsymbol{x}_0$ and the posterior integrates to 1 ($\int q_{0|t}(\boldsymbol{x}_0|\boldsymbol{x}_t) d\boldsymbol{x}_0 = 1$), solving for $v$ provides the optimal vector field:
$$
v^*=D_{\theta^*}(\boldsymbol{x}_t, t)=\mathbb{E}_{\boldsymbol{x}_0 \sim q_{0|t}(\boldsymbol{x}_0|\boldsymbol{x}_t)} [ \boldsymbol{v}_{t|0}(\boldsymbol{x}_t|\boldsymbol{x}_0) ]
$$
By definition, this matches the marginal velocity field $\boldsymbol{v}_t(\boldsymbol{x}_t) \doteq \mathbb{E}_{\boldsymbol{x}_0 \sim q_{0|t}(\boldsymbol{x}_0|\boldsymbol{x}_t)} [ \boldsymbol{v}_{t|0}(\boldsymbol{x}_t|\boldsymbol{x}_0) ]$, concluding the proof.
\end{proof}

\subsection{Closed-Form Solution of Reweighted Conditional Flow Matching}
We now present our primary result regarding reweighted conditional flow matching.\begin{theorem}[Optimal solution of reweighted conditional flow matching]When $\mathbb{E}_{\boldsymbol{x}_0 \sim q_{0|t}} [ w(\boldsymbol{x}_0) ] > 0$, the reweighted conditional flow matching loss
$$\mathcal{L}_{W}(\theta)=\mathbb{E}_{t \sim \mathcal{U}, \ \boldsymbol{x}_0 \sim p_{\text{data}}(\boldsymbol{x}_0), \ \boldsymbol{x}_t \sim q_{t|0}(\boldsymbol{x}_t|\boldsymbol{x}_0)} \left[ w(\boldsymbol{x}_0) \| D_\theta(\boldsymbol{x}_t, t) - \boldsymbol{v}_{t|0}(\boldsymbol{x}_t|\boldsymbol{x}_0) \|^2 \right]$$induces the optimal solution:
$$
D_{\theta^*}(\boldsymbol{x}_t;t)=\frac{\mathbb{E}_{\boldsymbol{x}_0 \sim q_{0|t}} [ w(\boldsymbol{x}_0) \boldsymbol{v}_{t|0}(\boldsymbol{x}_t|\boldsymbol{x}_0) ]}{\mathbb{E}_{\boldsymbol{x}_0 \sim q_{0|t}} [ w(\boldsymbol{x}_0) ]}
$$
\end{theorem}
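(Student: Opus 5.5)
The plan is to reuse the pointwise argument from the preceding lemma for unweighted conditional flow matching, now carrying the weight $w(\boldsymbol{x}_0)$ through.

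First, expand the expectation as an integral over $t$, $\boldsymbol{x}_0$ and $\boldsymbol{x}_t$. Apply Bayes' rule $p_{\rm data}(\boldsymbol{x}_0)q_{t|0}(\boldsymbol{x}_t|\boldsymbol{x}_0)=q_t(\boldsymbol{x}_t)q_{0|t}(\boldsymbol{x}_0|\boldsymbol{x}_t)$. This rewrites the loss as
\[
\mathcal{L}_W(\theta)=\int_0^1\!\!\int q_t(\boldsymbol{x}_t)\,J_{\boldsymbol{x}_t,t}\big(D_\theta(\boldsymbol{x}_t,t)\big)\,d\boldsymbol{x}_t\,dt,
\qquad
J_{\boldsymbol{x}_t,t}(v)=\mathbb{E}_{\boldsymbol{x}_0\sim q_{0|t}}\big[w(\boldsymbol{x}_0)\|v-\boldsymbol{v}_{t|0}(\boldsymbol{x}_t|\boldsymbol{x}_0)\|^2\big].
\]
The outer weight $q_t(\boldsymbol{x}_t)\,dt$ is nonnegative and $D_\theta$ is treated as an arbitrary function of $(\boldsymbol{x}_t,t)$. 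So, as in the lemma, minimizing $\mathcal{L}_W$ reduces to minimizing $J_{\boldsymbol{x}_t,t}(v)$ separately for each fixed $(\boldsymbol{x}_t,t)$.

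Next, expand the square:
\[
J(v)=W\|v\|^2-2\langle v,\,\mathbb{E}_{q_{0|t}}[w\,\boldsymbol{v}_{t|0}]\rangle+C,
\qquad W=\mathbb{E}_{\boldsymbol{x}_0\sim q_{0|t}}[w(\boldsymbol{x}_0)],
\]
where $C$ does not depend on $v$. This is a quadratic in $v$ with Hessian $2W I$. This step is the only real point of difference from the unweighted case. Because $w$ may be negative (the signed-measure setting of \Cref{sec.negative-weighting}), $J$ is not automatically convex. The hypothesis $W>0$ is exactly what makes $J$ strictly convex and coercive, so it has a unique global minimizer. Without it the infimum would be $-\infty$ or the problem degenerate, which is why the statement assumes $W>0$.

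Finally, set $\nabla_v J=2\big(Wv-\mathbb{E}_{q_{0|t}}[w\,\boldsymbol{v}_{t|0}]\big)=0$ and divide by $W>0$. This gives $v^*=\mathbb{E}_{q_{0|t}}[w\,\boldsymbol{v}_{t|0}]/\mathbb{E}_{q_{0|t}}[w]$. By strict convexity it is the unique minimizer at each $(\boldsymbol{x}_t,t)$, which yields the claimed $D_{\theta^*}$.

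The main technical obstacle is justifying the interchange: the pointwise minimization must give the global minimizer, and the integrals must be finite. I would handle this by assuming, as in the lemma, enough integrability, namely $w\|\boldsymbol{v}_{t|0}\|^2\in L^1$. I would also assume $W>0$ holds for $q_t$-almost every $\boldsymbol{x}_t$, reading the hypothesis that way, so that each pointwise problem is well posed.
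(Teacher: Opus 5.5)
Your proposal is correct and follows essentially the same route as the paper: rewrite the loss via $p_{\rm data}(\boldsymbol{x}_0)q_{t|0}(\boldsymbol{x}_t|\boldsymbol{x}_0)=q_t(\boldsymbol{x}_t)q_{0|t}(\boldsymbol{x}_0|\boldsymbol{x}_t)$, minimize the pointwise quadratic $M\|v\|^2-2v^\top N+C$, and solve the stationarity condition after dividing by $M>0$. Your added points make explicit what the paper leaves implicit: $M>0$ gives strict convexity, so the stationary point is the unique minimizer and not just a critical point, and the integrability and almost-everywhere reading of the hypothesis justify the pointwise reduction.
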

\begin{proof}
Let $v=D_\theta(\boldsymbol{x}_t, t)$. Expanding the expectation and substituting the marginal-posterior decomposition $q_t(\boldsymbol{x}_t) q_{0|t}(\boldsymbol{x}_0|\boldsymbol{x}_t)$, we isolate the pointwise loss for a fixed $\boldsymbol{x}_t$ and $t$:$$J_W(v)=\int q_{0|t}(\boldsymbol{x}_0|\boldsymbol{x}_t) w(\boldsymbol{x}_0) \| v - \boldsymbol{v}_{t|0}(\boldsymbol{x}_t|\boldsymbol{x}_0) \|^2 \, d\boldsymbol{x}_0$$Expanding the quadratic term, we obtain:
\begin{equation}
    J_W(v)=\|v\|^2 \underbrace{\left( \int q_{0|t}(\boldsymbol{x}_0|\boldsymbol{x}_t) w(\boldsymbol{x}_0) \, d\boldsymbol{x}_0 \right)}_{M(\boldsymbol{x}_t)} - 2 v^T \underbrace{\left( \int q_{0|t}(\boldsymbol{x}_0|\boldsymbol{x}_t) w(\boldsymbol{x}_0) \boldsymbol{v}_{t|0}(\boldsymbol{x}_t|\boldsymbol{x}_0) \, d\boldsymbol{x}_0 \right)}_{N(\boldsymbol{x}_t)} + C\label{eq.quadratic_loss_extracted}
\end{equation}
Setting the gradient to zero yields $2v M(\boldsymbol{x}_t) - 2N(\boldsymbol{x}_t) = 0$. Unlike the unweighted case, the integral $M(\boldsymbol{x}_t)$ acts as a normalizing constant for the weights rather than summing to 1. Provided $M(\boldsymbol{x}_t) > 0$, we can solve for $v$ to find the optimal closed-form solution:
$$v^*=D_{\theta^*}(\boldsymbol{x}_t, t)=\frac{N(\boldsymbol{x}_t)}{M(\boldsymbol{x}_t)} = \frac{\mathbb{E}_{\boldsymbol{x}_0 \sim q_{0|t}} [ w(\boldsymbol{x}_0) \boldsymbol{v}_{t|0}(\boldsymbol{x}_t|\boldsymbol{x}_0) ]}{\mathbb{E}_{\boldsymbol{x}_0 \sim q_{0|t}} [ w(\boldsymbol{x}_0) ]}
$$
\end{proof}

\subsection{Velocity Field Generates Reweighted Data Distribution}\label{sec.apdx.generates_reweights_datadist}
Finally, we prove that $D_{\theta^*}$ generates the probability path:
$$
\rho_t(\boldsymbol{x}_t|\boldsymbol{s})=\frac{1}{Z} \int w(\boldsymbol{s},\boldsymbol{x}_0) p_{\text{data}}(\boldsymbol{x}_0|\boldsymbol{s}) q_{t|0}(\boldsymbol{x}_t|\boldsymbol{x}_0) \, d\boldsymbol{x}_0
$$
where $Z(\boldsymbol{s})=\int w(\boldsymbol{s},\boldsymbol{x}_0) p_{\text{data}}(\boldsymbol{x}_0|\boldsymbol{s}) \, d\boldsymbol{x}_0$ is the global normalizing constant.
\begin{proof}
To prove $D_{\theta^*}(\boldsymbol{x}_t, t)$ generates this distribution, we verify that it satisfies the continuity equation: $\frac{\partial \rho_t}{\partial t} + \nabla \cdot (\rho_t D_{\theta^*}) = 0$. Let $\tilde{\rho}_t(\boldsymbol{x}_t) = Z \rho_t(\boldsymbol{x}_t)$ denote the unnormalized density. The flow term is:
$$
\tilde{\rho}_t D_{\theta^*}=\int w(\boldsymbol{x}_0) p_{\text{data}}(\boldsymbol{x}_0) q_{t|0}(\boldsymbol{x}_t|\boldsymbol{x}_0) \boldsymbol{v}_{t|0}(\boldsymbol{x}_t|\boldsymbol{x}_0) \, d\boldsymbol{x}_0
$$
Applying the divergence operator and substituting the conditional continuity equation $\nabla \cdot (q_{t|0} \boldsymbol{v}_{t|0}) = - \frac{\partial q_{t|0}}{\partial t}$, we find:
$$
\nabla \cdot (\tilde{\rho}_t D_{\theta^*})=\int w(\boldsymbol{x}_0) p_{\text{data}}(\boldsymbol{x}_0) \left( - \frac{\partial q_{t|0}(\boldsymbol{x}_t|\boldsymbol{x}_0)}{\partial t} \right) \, d\boldsymbol{x}_0
$$
By the linearity of the derivative, we extract the time derivative:
$$
\nabla \cdot (\tilde{\rho}_t D_{\theta^*})=- \frac{\partial}{\partial t} \int w(\boldsymbol{x}_0) p_{\text{data}}(\boldsymbol{x}_0) q_{t|0}(\boldsymbol{x}_t|\boldsymbol{x}_0) \, d\boldsymbol{x}_0 = - \frac{\partial \tilde{\rho}_t}{\partial t}
$$
Thus, $\frac{\partial \tilde{\rho}_t}{\partial t} + \nabla \cdot (\tilde{\rho}_t v_t^*) = 0$, confirming that the vector field properly evolves $\rho_0$ to $\rho_1$.
\end{proof}

\section{Population-based Estimator for the Normalization Term in Linear and Square Reweighting}
\label{app:relu_derivations}

In this section, we detail on how to obtain the normalization term $\nu(\sbb)$ when using linear and square reweighting. 

\subsection{General Setup}
We omit the dependence on states $\sbb$. We assume we have access to a batch of samples $\{\ab_{(i)}\}_{i=1}^N$, and their $Q$-values are $\{Q(\ab_{(i)})\}_{i=1}^N$ accordingly. Denote $x_{(i)}=Q(\ab_{(i)})$ for brevity, and we assume $\{x_{(i)}\}_{i=1}^N$ are sorted in descending order such that $x_{(1)} \ge x_{(2)} \ge \dots \ge x_{(N)}$.

We define the \textit{active set} as $S = \{i \mid x_i > \nu\}$, i.e. the index set of samples which will not be clipped to zero. Let $k = |S|$ denote the number of active samples. For $i \in S$, $\bar{g}_f(x_i - \nu) = x_i - \nu$; otherwise, it is 0 according to linear and square reweighting. 

\paragraph{High-level Idea. }Because the $\max(\cdot, 0)$ operation truncates values below the threshold $\nu$ to zero, only the top $k$ elements in our sorted sequence will actively contribute to the target sum. The core challenge is determining this exact number of active elements, $k$, without knowing $\nu$ beforehand.

We achieve this by testing each possible $k \in \{1, \dots, N\}$ and calculating an "excess" metric. This metric measures the cumulative difference between the top $k$ elements and the candidate $k$-th element, $x_{(k)}$. By checking if this excess falls within our target budget (which is exactly $1$), we can identify the true number of active samples. Once $k$ is determined, the threshold $\nu$ can be analytically solved using the sums of those active elements.

\subsection{Case I: Linear Reweighting}
For the linear case, the normalization constraint requires that the sum of the active clipped values equals 1:$$\sum_{i=1}^k (x_{(i)} - \nu) = 1$$Assuming we know $k$, we can distribute the sum and solve directly for $\nu$:$$\sum_{i=1}^k x_{(i)} - k\nu = 1 \implies \nu = \frac{1}{k} \left( \sum_{i=1}^k x_{(i)} - 1 \right)$$To determine $k$, we use the definition of the active set: for the $k$-th element to be active, we must have $x_{(k)} > \nu$. Substituting our expression for $\nu$ into this inequality yields:$$x_{(k)} > \frac{1}{k} \left( \sum_{i=1}^k x_{(i)} - 1 \right)$$Rearranging the terms gives us our condition:$$k x_{(k)} > \sum_{i=1}^k x_{(i)} - 1 \implies \sum_{i=1}^k x_{(i)} - k x_{(k)} < 1$$The term $\sum_{i=1}^k x_{(i)} - k x_{(k)}$ is the "excess" if the threshold were set exactly at $x_{(k)}$. We compute this excess for all possible indices using the cumulative sum of our sorted values. The true number of active particles $k$ is simply the count of indices where this excess is less than or equal to 1. After identifying $k$, we plug it back into our initial equation to find the final normalization term $\nu$.



\subsection{Case II: Square Reweighting}
For the square case, the constraint requires the sum of the squared active values to equal 1:$$\sum_{i=1}^k (x_{(i)} - \nu)^2 = 1$$Expanding the square and distributing the sum gives a quadratic equation in terms of $\nu$:$$k\nu^2 - 2 \left( \sum_{i=1}^k x_{(i)} \right) \nu + \left( \sum_{i=1}^k x_{(i)}^2 - 1 \right) = 0$$Let $S_1 = \sum_{i=1}^k x_{(i)}$ and $S_2 = \sum_{i=1}^k x_{(i)}^2$. The equation simplifies to $k\nu^2 - 2S_1\nu + (S_2 - 1) = 0$. Using the quadratic formula and taking the smaller root (to guarantee that $\nu$ remains strictly less than the active elements), we obtain:$$\nu = \frac{S_1 - \sqrt{S_1^2 - k S_2 + k}}{k}$$Similar to the linear case, we must first find $k$. The condition $x_{(k)} > \nu$ is algebraically equivalent to requiring that the sum of squared differences from $x_{(k)}$ is less than or equal to our budget of 1:$$\sum_{i=1}^k (x_{(i)} - x_{(k)})^2 \le 1$$Expanding this excess metric gives:$$\sum_{i=1}^k x_{(i)}^2 - 2 x_{(k)} \sum_{i=1}^k x_{(i)} + k x_{(k)}^2 \le 1$$$$S_2 - 2 x_{(k)} S_1 + k x_{(k)}^2 \le 1$$We evaluate this condition for all indices using the cumulative sums of both $x$ and $x^2$. The active set size $k$ is the number of elements that satisfy this inequality. Finally, we use this $k$, along with its corresponding $S_1$ and $S_2$, to compute the discriminant $\Delta = S_1^2 - k S_2 + k$ and solve for $\nu$.



\subsection{Reference Implementation}
\label{app:jax_implementation}
The above calculation can be fully vectorized in our JAX implementation. In the below, we provide a vectorized JAX implementation for solving the normalization factor for both the linear and square reweighting. 

\begin{lstlisting}[
language=Python,
    caption={JAX implementation for solving $\nu$.},
    label={lst:jax_code},
    basicstyle=\ttfamily\footnotesize,
    backgroundcolor=\color{black!5},          % Subtle light gray background
    keywordstyle=\color{blue}\bfseries,       % Bold blue keywords
    commentstyle=\color{green!60!black}\itshape, % Italic green comments
    stringstyle=\color{purple},               % Purple strings
    numbers=left,                             % Line numbers on the left
    numberstyle=\tiny\color{gray},            % Tiny gray line numbers
    stepnumber=1,
    numbersep=8pt,                            % Space between numbers and code
    frame=single,
    rulecolor=\color{black!20},               % Soft gray border
    breaklines=true,
    showstringspaces=false,
    showspaces=false,
    xleftmargin=2em,                          % Margin for line numbers
    framexleftmargin=1.5em,                   % Extends background behind numbers
    xrightmargin=0.5em,
    columns=flexible,
    keepspaces=true
]
def solve_normalizer_linear(q: jnp.ndarray, temp: float):
    num_particles = q.shape[-1]
    q_sorted = jnp.sort(q, axis=-1)[..., ::-1]
    q_cumsum = jnp.cumsum(q_sorted, axis=-1)

    k_indices = jnp.arange(1, num_particles+1).reshape(1, -1)
    excess = q_cumsum - k_indices * q_sorted
    active_mask = excess <= temp
    k = jnp.sum(active_mask, axis=-1, keepdims=True)

    sum_active = jnp.take_along_axis(q_cumsum, k-1, axis=-1)
    nu = (sum_active - temp) / k
    return nu

def solve_normalizer_square(q: jnp.ndarray, temp: float):
    q_max = jnp.max(q, axis=-1, keepdims=True)
    q = q - q_max
    num_particles = q.shape[-1]
    target_sum = temp ** 2

    q_sorted = jnp.sort(q, axis=-1)[..., ::-1]
    q_cumsum = jnp.cumsum(q_sorted, axis=-1)
    q_squared_cumsum = jnp.cumsum(q_sorted ** 2, axis=-1)

    k_indices = jnp.arange(1, num_particles+1).reshape(1, -1)
    excess = q_squared_cumsum \
            - 2 * q_cumsum * q_sorted \
            + k_indices * (q_sorted ** 2)
    active_mask = excess <= target_sum
    k = jnp.maximum(jnp.sum(active_mask, axis=-1, keepdims=True), 1)

    S1 = jnp.take_along_axis(q_cumsum, k-1, axis=-1)
    S2 = jnp.take_along_axis(q_squared_cumsum, k-1, axis=-1)
    delta = S1**2 - k * S2 + k * target_sum
    nu = (S1 - jnp.sqrt(jnp.maximum(delta, 0.0))) / k
    return nu + q_max
\end{lstlisting}

\section{Auto-tuning the Temperature}
\label{apdx.temp_tuning}

The temperature $\lambda$ in \algbb controls the sharpness of the reweighting distribution over the $N$ candidate actions $\{a_i\}_{i=1}^N$ drawn from the old policy. Its appropriate scale is tightly coupled with the magnitude of the $Q$-values, which drifts substantially during training, making a fixed $\lambda$ brittle. We therefore adapt $\lambda$ online by constraining the divergence between the reweighting distribution and the uniform distribution over particles. Let $w_i(\lambda)$ denote the normalized per-particle weights produced by the chosen reweighting scheme (exponential, linear, or squared), so that $\sum_i w_i = 1$. Its Shannon entropy is $\mathcal{H}(w) = -\sum_i w_i \log w_i$, and its KL divergence to the uniform distribution $\mathrm{Unif}_N$ satisfies
\begin{equation}                                 
  \mathrm{KL}\!\left(w \,\|\, \mathrm{Unif}_N\right) \;=\; \log N - \mathcal{H}(w).
\end{equation}

Following the dual formulation of SAC~\citep{haarnoja2018soft}, we treat $\lambda$ as the Lagrange multiplier of the soft constraint $\mathrm{KL}(w \,\|\, \mathrm{Unif}_N) \le \kappa$, where $\kappa$ is a target KL budget. Dual ascent yields the surrogate temperature loss                       
\begin{equation}
  \mathcal{L}_\lambda \;=\; \lambda \cdot \bigl(\kappa + \mathcal{H}(w) - \log N\bigr) \;=\; \lambda \cdot \bigl(\kappa - \mathrm{KL}(w \,\|\, \mathrm{Unif}_N)\bigr),           
\end{equation}
minimized w.r.t.\ $\lambda$ jointly with the actor and critic. Intuitively, when the reweighting is too peaky ($\mathrm{KL}>\kappa$) the gradient drives $\lambda$ upward, smoothing the weights; when it is too diffuse ($\mathrm{KL}<\kappa$) it drives $\lambda$ downward, sharpening them. Positivity of $\lambda$ is enforced via a softplus parameterization.

\section{Matching Policy with the Target Measure via Noise Contrastive Estimation}
\label{app.nce}

As an alternative to reweighted flow matching, we can align the current diffusion policy $\pi_\theta$ with the target policy using Noise Contrastive Estimation (NCE). NCE frames density estimation as a classification problem, distinguishing between \textit{positive} samples from the target policy and \textit{negative} samples from a noise proposal distribution. In our setting, the target policy is $\pi^*(\ab_0|\sbb) \propto\pi_{\rm old}(\ab_0|\sbb)w(\sbb, \ab_0)$, where $w(\sbb, \ab_0) = g_f\left(\frac{Q(\sbb, \ab_0) - \nu(\sbb)}{\lambda}\right)$ is the reweighting factor. We naturally select the old policy $\pi_{\rm old}(\ab_0|\sbb)$ as our noise proposal distribution.

Consider a set of $M$ actions $\{\ab^{(1)}_0, \ab^{(2)}_0, \dots, \ab^{(M)}_0\}$ sampled independently from the old policy $\pi_{\rm old}(\cdot|\sbb)$ for a given state $\sbb$. Under the target policy, the categorical probability of selecting a specific action $\ab^{(i)}_0$ from this set is independent of the global partition function $Z(\sbb)$, as it cancels out in the normalization:
$$
p^*(\ab^{(i)}_0 | \{\boldsymbol{a}_0^{(j)}\}_{j=1}^M, \sbb) = \frac{\pi^*(\ab^{(i)}_0|\sbb) / \pi_{\rm old}(\ab^{(i)}_0|\sbb)}{\sum_{j=1}^M \pi^*(\ab^{(j)}_0|\sbb) / \pi_{\rm old}(\ab^{(j)}_0|\sbb)} = \frac{w(\sbb, \ab^{(i)}_0)}{\sum_{j=1}^M w(\sbb, \ab^{(j)}_0)}
$$
We construct the categorical probability of select action $\boldsymbol{a}_0^{(i)}$ from this set as:
$$
p_\theta(\boldsymbol{a}_0^{(i)}|\{\boldsymbol{a}_0^{(j)}\}_{j=1}^M, \sbb)=\frac{ \exp\left(-\frac 1\gamma\mathbb{E}_{t, \boldsymbol{a}_t}[\|D_\theta(\boldsymbol{s}, \boldsymbol{a}_t, t) - \boldsymbol{v}_{t|0}(\ab_t|\ab_0^{(i)})\|^2]\right)}{\sum_{j=1}^M \exp\left(-\frac 1\gamma\mathbb{E}_{t, \boldsymbol{a}_t}[\|D_\theta(\boldsymbol{s}, \boldsymbol{a}_t, t) - \boldsymbol{v}_{t|0}(\ab_t|\ab_0^{(j)})\|^2]\right)}
$$

To align the diffusion policy $\pi_\theta$ with the target measure, we minimize the cross-entropy between the target categorical distribution $p^*$ and the current policy's categorical distribution $p_\theta$ over the sampled set:
$$
\begin{aligned}
\min_\theta\ \mathcal{L}(\theta) &= - \mathbb{E}_{\sbb, \ab_0^{(1):(M)}} \left[ \sum_{i=1}^M p^*(\ab^{(i)}_0 | \{\boldsymbol{a}_0^{(j)}\}_{j=1}^M, \sbb) \log p_\theta(\ab^{(i)}_0 | \{\boldsymbol{a}_0^{(j)}\}_{j=1}^M, \sbb) \right]\\
&=\mathbb{E}_{\sbb, \ab_0^{(1):(M)}} \Bigg[ \sum_{i=1}^M \frac{w(\sbb, \ab_0^{(i)})}{\sum_j w(\sbb, \ab_0^{(j)})}\frac 1\gamma\mathbb{E}_{t, \boldsymbol{a}_t}[\|D_\theta(\boldsymbol{s}, \boldsymbol{a}_t, t) - \boldsymbol{v}_{t|0}(\ab_t|\ab_0^{(i)})\|^2]\\
&\quad\quad\quad\quad+\log\sum_{j=1}^M\exp\left(-\frac 1\gamma\mathbb{E}_{t, \boldsymbol{a}_t}[\|D_\theta(\boldsymbol{s}, \boldsymbol{a}_t, t) - \boldsymbol{v}_{t|0}(\ab_t|\ab_0^{(j)})\|^2]\right)\Bigg]\\
&\leq\mathbb{E}_{\sbb, \ab_0^{(1):(M)}, \ab_t^{(1):(M)}} \Bigg[ \underbrace{\sum_{i=1}^M \frac{w(\sbb, \ab_0^{(i)})}{\sum_j w(\sbb, \ab_0^{(j)})}\frac 1\gamma\|D_\theta(\boldsymbol{s}, \boldsymbol{a}^{(i)}_t, t) - \boldsymbol{v}_{t|0}(\ab^{(i)}_t|\ab_0^{(i)})\|^2}_{\text{positive part}}\\
&\quad\quad\quad\quad+\underbrace{\log\sum_{j=1}^M\exp\left(-\frac 1\gamma\|D_\theta(\boldsymbol{s}, \boldsymbol{a}^{(j)}_t, t) - \boldsymbol{v}_{t|0}(\ab^{(j)}_t|\ab_0^{(j)})\|^2\right)}_{\text{negative part}}\Bigg],\\
\end{aligned}
$$
where the final inequality follows from Jensen’s inequality and the convexity of the LogSumExp function. The resulting objective consists of two components: (1) a weighted flow matching loss that minimizes the flow matching error, thereby increasing the likelihood of the policy generating high-value actions; and (2) a negative part that penalizes the likelihood of alternative samples, effectively pushing the policy's density away from undesired regions of the action space.


\section{Experimental Details}
\label{sec.apdx.experimental_details}

\subsection{Baselines}
\label{sec:apdx_baselines}

We evaluate our approach against two distinct families of reinforcement learning (RL) algorithms. The first group consists of five state-of-the-art \textbf{online diffusion-policy methods}, encompassing both off-policy and on-policy architectures:

\begin{itemize}
    \item \textbf{QSM}~\cite{psenka2023learning}: Implements Langevin dynamics using the gradient of a learned $Q$-function as the score function.
    \item \textbf{QVPO}~\cite{ding2024diffusion}: Employs a $Q$-weighted variational objective for diffusion training, though it is notably constrained by its inability to process negative rewards effectively.
    \item \textbf{DACER}~\cite{wang2024diffusion}: Backpropagates gradients directly through the reverse diffusion process and utilizes a GMM entropy regulator to manage the exploration-exploitation trade-off.
    \item \textbf{DIPO}~\cite{yang2023policy}: Uses a two-stage approach where state-action particles are updated via $Q$-gradients before being fitted to a diffusion model.
\end{itemize}

The second group includes three \textbf{classic model-free baselines}: \textbf{TD3}~\cite{fujimoto2018addressing}, and \textbf{SAC}~\cite{haarnoja2018soft}.

\begin{table}[h]
    \centering
    \caption{Hyperparameters}
    \begin{tabular}{l|c}
    \toprule
      \textbf{Name}   & \textbf{Value} \\
      \midrule
      Critic learning rate & 3e-4 \\
      Policy learning rate & 1e-4\\
      Diffusion steps & 20\\
      Diffusion noise schedules & Cosine \\      
      Policy network hidden layers & 2\\
      Policy network hidden neurons & 256\\
      Policy network activation & Mish\\
      Value network hidden layers & 2\\
      Value network hidden neurons & 256\\
      Value network activation & ReLU\\
      replay buffer size (off-policy only) & 1 million\\
      \midrule
    Number of sampled particles  & 64 \\
    Additive Noise & 0.2 \\
    Diffusion Steps & 20 \\
    target KL budget $\kappa$ - Hopper-v5 & 2.5 \\
    target KL budget $\kappa$ - Ant-v5 & 1.5 \\
    target KL budget $\kappa$ - Other tasks & 2.0 \\
         \bottomrule
    \end{tabular}
    \label{tab:main_hyper}
\end{table}

For \textbf{\algbb-Linear}, \textbf{\algbb-Square} and \textbf{\algbb-Exp}, we implement the automatic temperature tuning detailed in \Cref{apdx.temp_tuning}, and sweep the KL tolerance within $\{1.5, 2.0, 2.5\}$. 


\subsection{Reward Functions}
\label{sec.apdx.reward}
The Cheetah domain tasks are designed to track a target horizontal velocity $v^* = 10.0$ m/s (limited by the physical capability, the robot cannot actually achieve 10m/s. Therefore, the task is to run as fast as possible.). Let $v$ denote the current horizontal velocity of the torso. We evaluate our method across several reward formulations, categorized into linear, quadratic, and absolute value variations.

We design the following reward functions (with proper transformation to align the scale), ranked from ``flat'' landscape to ``steep'' landscape,

\begin{itemize}
    \item \texttt{Sqrt} reward
    $R_{abs\_sqrt}(v) = \sqrt{|v - v^*|}$
    \item \texttt{Linear} reward $R_{\rm linear}(v) = \frac{v}{v^*}$
    \item \texttt{Square} reward
    $R_{abs\_sq}(v) = |v - v^*|^2$
    \item \texttt{Exponental} reward
     $R_{abs\_exp}(v) = \exp(-|v - v^*|)$
\end{itemize}

\subsection{Compute Resources and Runtime}
\label{sec.apdx.resource}
We run our experiments on a workstation with 8 RTX 5090s. On our workstation, a single run of bandit problem takes around 5 minutes, a single run of MuJoCo/DMC task takes around 2 hours, while a single run of DNA model fine-tuning takes around 35 minutes (estimation might be effected by other users' activities).
No excessive computational capabilities are required.

\section{Detailed Explaination of the Repelling Effects of Negative Reweighting}
\label{sec.geometric_interp}
 We provide a detailed explaination about the repelling effect here.

Compared to original optimal solution with non-negative weights in~\Cref{eq.closed_form_solution_reweighting}, there are two cases if we consider weights being possibly negative, depending on the sign of denominator $\mathbb{E}_{\ab_0 \sim p_{0|t}} [ w(\sbb,\ab_0) ]$:

\textbf{Case I~(discussed in~\Cref{sec.negative-weighting}):} For all $\ab_t$ with $\mathbb{E}_{\ab_0 \sim p_{0|t}} [ w(\sbb,\ab_0) ] > 0$, 
\Cref{eq.closed_form_solution_reweighting} still servers as the optimal solution to~\eqref{eq:weighted_matching}. However, the negative weights will change the sign of $\boldsymbol{v}_{t|0}$ in the weighted average, pushing the weighted average velocity away from those $\boldsymbol{v}_{t|0}$ with negative weights.


\begin{remark}[Another interpretation from generation of reweighted distributions via signed measures]
\label{rem:signed_measure_generation}
Even when the weights $w(\sbb, \ab_0)$ take negative values (as in \textbf{Case I}), the closed-form velocity field solution from \Cref{eq.closed_form_solution_reweighting} generates a valid probability path $\rho_t$, provided the total measure remains positive. The generated path is given by:
\begin{equation}
    \rho_t(\ab|\sbb) = \frac{1}{Z(\sbb)} \int w(\sbb,\ab_0) \pi_{\rm old}(\ab_0|\sbb) p_{t|0}(\ab|\ab_0) \, d\ab_0,
\end{equation}
where $Z(\sbb) = \int w(\sbb, \ab_0) \pi_{\rm old}(\ab_0|\sbb) d\ab_0$ is the normalization constant. Mathematically, the negative weights introduce negative components to the integral. To satisfy the normalization constraint $\int \rho_t(\ab|\sbb) d\ab = 1$, the contribution from regions with positive weights must be amplified to offset the negative mass. Consequently, this mechanism effectively concentrates significantly higher probability mass on positive samples than standard reweighting.
\end{remark}

\textbf{Case II:} For all $\ab_t$ with $\mathbb{E}_{\ab_0 \sim p_{0|t}} [ w(\sbb,\ab_0) ] \leq 0$, the quadratic term in loss function in~\eqref{eq.quadratic_loss_extracted} has negative coefficients. Therefore, the loss is unbounded and have a ``repelling'' effect, consistently pushing $D_\theta$ away from the solution in~\eqref{eq.closed_form_solution_reweighting} to arbitrary directions, which also pushes generated path $\ab_t$ away from the negative weights region.

In summary, in both cases the negative weights will \emph{push the generated probability paths away from actions with negative weights}. 

\textbf{Repelling effect helps policy improvement.} There are two possible outcomes from the repelling effect: \textbf{a)} pushing the actions towards higher advantages such as Remark~\ref{rem:signed_measure_generation}; \textbf{b)} pushing actions to regions not included in the support of old policy $\pi_{\rm old}$, encouraging exploration. Both cases provide useful feedback from negative samples.




\newpage

\end{document}